\newcommand{\R}{\mathbb{R}}
\newcommand{\N}{\mathbb{N}}
\newcommand{\Z}{\mathbb{Z}}
\newcommand{\Sp}{\mathbb{S}}
\DeclareMathOperator{\Div}{div}
\DeclareMathOperator{\Spt}{spt}
\newcommand{\car}[1]{\raisebox{.5pt}{$\chi$}_{#1}}
\newcommand{\sca}[3][]{\left\langle {#2},{#3}\right\rangle_{#1}}
\begin{document}
\title{A variational method for curve extraction with
curvature-dependent energies}
%
%
\author{Majid Arthaud\inst{1,2,4}
\and
Antonin Chambolle\inst{3,2}\orcidID{0000-0002-9465-4659} \and
Vincent Duval\inst{2,3}\orcidID{0000-0002-7709-256X}}
\authorrunning{Majid Arthaud et al.}
%
\institute{ENPC, 6 Av.~Blaise Pascal, 77420 Champs-sur-Marne, France\\
\email{majid.arthaud@eleves.enpc.fr} \\[2mm] \and
INRIA Mokaplan, INRIA Paris, Paris-Dauphine, CNRS, France\\
\email{vincent.duval@inria.fr} \\[2mm] \and
CEREMADE, CNRS and Universit\'e Paris-Dauphine, PSL, Paris, France\\
\email{antonin.chambolle@ceremade.dauphine.fr} \and M.A. is now at University of Michigan, Ann Arbor, United States of America}
\maketitle              
\begin{abstract}
We introduce a variational approach for extracting curves
between a list of possible endpoints, based on
the discretization of an  energy and
Smirnov's decomposition theorem for vector fields.
It is  used to design a bi-level minimization approach to
automatically extract curves and 1D structures
from an image, which is mostly unsupervised.
We extend then the method to curvature-dependent energies,
using a now classical lifting of the curves in  the
space of positions and orientations equipped with an appropriate sub-Riemanian or Finslerian metric.

\keywords{Charges \and Numerical analysis  \and Bi-level problems \and Curve extraction \and Geodesic curves \and Active contours.}
\end{abstract}

\section{Introduction}

This paper is built upon the conference contribution~\cite{SSVMArthaudCD2025}, which is
extended towards more general
line energies, 3D examples and curvature-dependent energies.

We consider the problem of detecting (open) curves in images,
by a variant of active contours type models. {Although our final goal will be to simultaneously extract multiple curves and find their endpoints, for simplicity we first discuss the extraction of one curve given its endpoints. } Active
contours~\cite{snakes,CasellesGeometric,Cohen1996GlobalMF}
%
usually rely on the minimization of a potential along
a curve, with functionals of the form
\begin{equation}\label{eq:ActiveContour}
\int_{\Gamma} g(x,\tau)d\mathcal{H}^1
\end{equation}
which are minimal when the curve $\Gamma$ passes through
the lower values of the potential $g$ {defined from the observed image}  (here, $\mathcal{H}^1$
is the length measure, given by the $1$-dimensional Hausdorff measure). We consider a general setting, where
the potential is also allowed to depend on the tangent direction $\tau$ to the curve, yet, to start with,
the reader may discard this point. In general in these models, the curve
is assumed to be closed, but one could also specify the
endpoints, in order to find $\Gamma$ as a minimal
length curve (anisotropic geodesic) joining these two points~\cite{Cohen1996GlobalMF,DeschampsCohen}.
See for instance~\cite{Peyreetal} (in particular Section~3)
for a general overview
of these techniques.

A standard approach is then to solve an Eikonal equation (by fast-marching or fast-sweeping)
to evaluate the distance to one of the endpoints
(or several~\cite{CohenMultiple}),
and compute then the geodesic to some other endpoint(s). An advantage
is that the computation is fast, efficient, and can easily be adapted to many interesting frameworks (3D,
surfaces, nontrivial liftings such as in~\cite{LiYezzi,LiYezziCohen} for the tracking of the width of retina vessels, or as in~\cite{duits_optimal_2018} for their local orientation --- where here it is crucial to allow
for a spatial and orientation dependent weight $g$).
Alternatively, the problem of computing geodesics may be reformulated as a primal-dual problem as in~\cite{ennaji:hal-03620343}.

We consider here a (seemingly) different type of approach,
based on the representation of paths as measure fields
whose divergence is concentrated on their endpoints,
as proposed in~\cite{laville_smirnov,laville_algo}. {Consider for instance the problem:}
\begin{equation}\label{eq:ProblemContinuous}
\min_{z} \left\{ \int g\big(x,\tfrac{z}{|z|}\big)d|z|(x) : -\Div z = \delta_{B}-\delta_{A}\right\}
\end{equation}
where $z$ is a measure vector field of total variation measure $|z|$, and $A$ and $B$ are two given endpoints, while $z/|z|$ is the direction in the
polar decomposition
of the vectorial measure $z$. A typical example of such a measure  field is given by the integration along curves, that is, measures   of the form $\tau \mathcal{H}^1\resmes\Gamma$, where
$\tau$ is tangent to $\Gamma$ and $\Gamma$ is a curve. 
We show in Theorem~\ref{prop:smirnov} below (thanks to the celebrated
Smirnov theorem on extreme points of such vector fields~\cite{smirnov}) that the minimizers of~\eqref{eq:ProblemContinuous}
are superpositions of measures of this form, where each curve is geodesic between $A$ and $B$ for the cost $\int_\Gamma gd\mathcal{H}^1$. {One can then recover the curve(s) by integrating along the vector field $z$.}

Another path to this equivalence is through convex duality, which shows that~\eqref{eq:ProblemContinuous}
is essentially
equivalent to the standard approach described in~\cite{Peyreetal}, based on the Eikonal equation.
Yet a slight difference is that in~\eqref{eq:ProblemContinuous}, one can consider more general
constraints on $\Div z$, such as, for instance, consisting
of atomic measures with vanishing total mass, as we
propose in Section~\ref{sec:Iterative}.
This allows to compute simultaneously many geodesics between a possibly large family of points.
In that
case, the equivalence with the Eikonal equation approach
is broken and the setting we discuss becomes of interest.

After detailing our approach and showing how to derive
an algorithm for extracting curves from an image by
automatically detecting their endpoints, which was
essentially the contents of our contribution
in the SSVM~2025 conference proceedings~\cite{SSVMArthaudCD2025},
augmented by an extension in dimension 3,
we show how it is extended to  energies which
additionally encompass a convex function of the curvature, following the framework in~\cite{ChambollePockRoto} where the approach is developed
for a modified total variation regularizer.
A possible further extension could then be to
incorporate this setting in variational
formulations for more
general inverse problems (such as a
``deconvolution'' of thin structures) as in~\cite{laville_smirnov,laville_algo}.


The paper is organized as follows.
Before describing our numerical approach in Section~\ref{sec:Discrete} (which corresponds to
finding appropriate discretizations of~\eqref{eq:ProblemContinuous}), we
explain more precisely in the next section the connection between~\eqref{eq:ActiveContour} and \eqref{eq:ProblemContinuous}. Then, in Section~\ref{sec:Iterative}, we introduce a
bi-level method for automatically placing the endpoints. 
In Section~\ref{sec:curvature}, we explain how the proposed framework may be adapted to the ``roto-translational'' 
representation of oriented curves, in order to build weights $g$ which penalize the curvature.
Some proofs, and the precise description of the
optimization algorithm for solving the main convex
problem, are postponed to the Appendix.


\section{Charges and curves}\label{sec:charges}
To highlight the connection between the active contour functional~\eqref{eq:ActiveContour} and our model~\eqref{eq:ProblemContinuous}, we consider the space of ``normal charges'' $\mathcal{V}$  which was recently brought to the attention of the image processing community in~\cite{laville_smirnov}.
The idea consists in embedding finite curves in
a linear space of vector valued measures, similar
to the spaces of (normal) 1-currents~\cite{Federer} or 1-flat chains~\cite{Whitney}.

\subsection{The space of normal charges}
\label{sec:spacecharges}
 We work in a {compact arcwise connected} set 
$Q\subseteq \mathbb{R}^d$  (in practice $d$ will be $2$ or $3$, and $Q$ will be either a rectangle/cube or a cube periodic in one direction). The space of normal charges in $Q$ is the set of vector finite Radon measures supported in $Q$ and whose divergence is a finite Radon measure, that is,
\begin{equation}
    \mathcal{V} = \{z \in \mathcal{M}(\R^d)^d : \text{spt}(z)\subseteq Q, \, \text{div}(z) \in \mathcal{M}(\R^d)\},
\end{equation}
{where $\mathcal{M}(\R^d)$ is the space of finite (signed) Radon measures over $\R^d$.}
It is a normed space when equipped with the norm
\begin{equation} \label{continuous_norm}
\forall z\in \mathcal{V},\quad    \|z\|_\mathcal{V} = \|z\|_{\text{TV}} + \|\text{div}(z)\|_{\text{TV}},
\end{equation}
where $\|\cdot\|_{\text{TV}}$ denotes the total variation of (vector or scalar) Radon measures.

As normal charges are Radon measures, it is possible to define convex functionals of charges. We follow here the presentation of \cite[Sec. 2.6]{ambrosioFunctionsBoundedVariation2000}. Given a lower semi-continuous function $g\colon \R^d\times \R^d\rightarrow [0,+\infty]$, positively 1-homogeneous and convex in the second variable, we may define
\begin{align}\label{eq:defcvxmes}
    \forall z \in \mathcal{V},\quad G(z) = \int g\left(x, \frac{\diff z}{\diff \abs{z}}(x) \right) \diff \abs{z}(x).
\end{align}
Then, the functional $G\colon \mathcal{V}\rightarrow [0,+\infty]$ is convex positively homogeneous \cite[Prop. 2.37]{ambrosioFunctionsBoundedVariation2000} and Reshetnyak's lower semi-continuity theorem~\cite[Th. 2.38]{ambrosioFunctionsBoundedVariation2000} ensures that it is sequentially weak-* lower semi-continuous. We usually denote $G(z)$ by $\int g(x,z)$.

Furthermore, it is possible to define the measure $g(x,z)$ (see~\cite{bouchitte_integral_1988,demengel_convex_1984}) and to provide integral representations for the convex conjugate of $G$ (as in~\cite{bouchitte_integral_1988}).

\subsection{Charges induced by curves}
\label{sec:chargecurves}
An important example of a normal charge is induced by oriented curves of finite length, that is Lipschitz functions $\gamma \colon [a,b] \rightarrow Q$ for some $a,b \in  \R$.

The charge $\zgamma$ is then {the vectorial measure} defined by
\begin{equation}\label{eq:defzgamma}
\forall \varphi\in C_c(\R^d;\mathbb{R}^d),\quad    \langle \zgamma, \varphi \rangle = \int_a^b \varphi(\gamma(t))\cdot \gamma'(t) \diff t.
\end{equation}

One checks that $\mathrm{div}(\zgamma)= \delta_{\gamma(a)}-\delta_{\gamma(b)}$, so that  $\mathrm{div}(\zgamma)=0$ if and only if the curve is closed.

It is possible to give a geometric interpretation to~\eqref{eq:defzgamma}. Indeed, by the area formula and its consequence, the generalized change of variable~\cite[eq. 2.47]{ambrosioFunctionsBoundedVariation2000}, the following formula holds for every bounded Borel function $g\colon [a,b]\rightarrow \R$,
\begin{align*}
	\int_{\R^d}\left(\sum_{t\in [a,b]\cap \gamma^{(-1)}(y)}g(t)\right)\diff \Hh^1(y) = \int_a^b g(t)\abs{\gamma'(t)}\diff t.
\end{align*}
Setting $g(t)=1$ if $\gamma'(t)=0$ and $0$ otherwise, we see that for $\Hh^1$-a.e. $y \in  \R^d$, $\gamma'(t)\neq 0$ for every $t \in  \gamma^{(-1)}(y)$. Hence, given a test vector-field $\varphi \in C_c(\R^d;\R^d)$, we may set $g(t):= \varphi(\gamma(t))\cdot \frac{\gamma'(t)}{\abs{\gamma'(t)}}$ if $\abs{\gamma'(t)}\neq 0$, and $0$ otherwise, and we get
\begin{align*}
	\int_{\R^d}\varphi(y)\cdot\left(\sum_{t\in [a,b]\cap \gamma^{(-1)}(y)} \frac{\gamma'(t)}{\abs{\gamma'(t)}}\right)\diff \Hh^1(y) &= \int_a^b \varphi(\gamma(t))\cdot{\gamma'(t)}\diff t\\
	&= \langle  \zgamma,\varphi\rangle.
\end{align*}

If the curve is \emph{simple}, that is, if $\gamma$ is one-to-one on $[a,b)$, for $\Hh^1$-a.e. $y \in \mathrm{Im}\gamma$, there is a unique $t\in \gamma^{(-1)}(y)$ (and $\gamma$ is differentiable at $t$). We may thus define the  tangent vector as $\tau(y)=\frac{\gamma'(t)}{\abs{\gamma'(t)}}$, and the above equality states that the charge is equal to
\begin{align}
\zgamma =\tau \Hh^1\resmes (\mathrm{Im} \gamma) \quad &\mbox{and}\quad \abs{\zgamma}= \Hh^1\resmes (\mathrm{Im} \gamma).
\end{align}

Note that, as a consequence of the above discussion, \eqref{eq:defzgamma} is invariant by reparametrization, and, therefore, it is always possible  assume that $\abs{\gamma'(t)}\leq 1$ for a.e. $t$ and $\gamma$ is defined on $[0,L]$ for some $L>0$.

\subsection{Smirnov's decomposition theorem}
\label{sec:smirnovdecompo}

The landmark paper~\cite{smirnov} shows that any normal charge may be described as a superposition of charges induced by curves and their generalization. In order to state the main result we are interested in~\cite[Theorem C]{smirnov}, we briefly describe its setting.

Given some normal charge $z \in \mathcal{V}$, we say that $z$ \emph{decomposes into} $p\in \mathcal{V}$ and $q \in  \mathcal{V}$ if
\begin{align}
	z= p+ q \quad \mbox{and}\quad \abs{z} = \abs{p}+\abs{q}. \label{eq:defdecomposition}
\end{align}
The above equalities are understood in the sense of measures, and the expression $\abs{r}$ for $r \in \mathcal{V}$ denotes its variation measure, defined as $\abs{r}(E) = \sup \sum_{i} \abs{r(E_i)}$ for all Borel set $E \subseteq \R^d$, where the supremum is over all finite Borel subdivisions of $E$. In particular $\abs{r}(\R^d)= \|{r}\|_{\text{TV}}$.

Furthermore, we say that $z$ \emph{completely decomposes into $p$ and $q$} if~\eqref{eq:defdecomposition} holds and
\begin{align}
	\abs{\Div z} = \abs{\Div p}+\abs{\Div q}. \label{eq:defcompletedecompo}
\end{align}
Those definitions extend straightforwardly to finite (or integral) sums of charges.

In order to decompose some charge into curves of finite length, we endow the set of curves with the structure of a compact metric space. In view of Section~\ref{sec:chargecurves}, up to a reparametrization, a curve and the corresponding charge may be determined by some function $\gamma\colon \R \rightarrow \R^d$ which is $1$-Lipschitz. 
Smirnov defines  $\Lipd$  as the collection of all such functions together with the curve ``at infinity'' $f_\infty\colon t\mapsto \infty$. He endows it with a distance which metrizes uniform convergence on compact sets, and which makes it a compact space.
It is then possible to define a Borel measure on that space as in the following theorem.

\begin{theorem}[\protect{\cite[Thm. C]{smirnov}}]\label{thm:smirnov}
	Let $z\in \mathcal{V}$. Then there exist two normal charges $p,q \in \mathcal{V}$ such that $z$ completely decomposes into $p$ and $q$, $\Div p=0$, and $q$ completely decomposes into simple oriented curves of finite length. In other words, there exists some nonegative Borel measure $\sigma$ on $\Lipd$ such that
\begin{align}
	q &= \int \zgamma \diff \sigma(\gamma) \label{eq:smirnovz}\\
	\abs{q} &= \int \abs{\zgamma}\diff \sigma(\gamma),\label{eq:smirnovabs}\\
	\abs{\Div q} &= \int \abs{\Div \zgamma}\diff \sigma(\gamma)\label{eq:smirnovdiv}.
\end{align}
\end{theorem}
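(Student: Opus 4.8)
We only outline the strategy; the result is Smirnov's, and a complete proof may be found in~\cite{smirnov}. The statement bundles two assertions: the splitting $z=p+q$ with $\Div p=0$ together with the ``no cancellation'' identities $\abs{z}=\abs{p}+\abs{q}$ and $\abs{\Div z}=\abs{\Div p}+\abs{\Div q}$; and the fact that $q$ is itself an integral superposition of simple curves. The plan is to obtain both at once by \emph{greedily extracting curves} from $z$ until only a divergence-free charge remains. After restricting to a compact set and rescaling we may assume $Q$ compact and $z\in\mathcal{V}$ of finite mass; testing $\Div z$ against a function equal to $1$ on $Q$ shows it has zero total mass, so its Jordan decomposition reads $\Div z=\nu^+-\nu^-$ with $\nu^+\perp\nu^-$ and $\nu^+(\R^d)=\nu^-(\R^d)$.

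\textbf{Extraction of one curve.} The crux is: if $\Div z\neq 0$, there exist a simple Lipschitz curve $\gamma$ of finite length joining $\Spt\nu^+$ to $\Spt\nu^-$ and a weight $\lambda>0$ such that subtracting $\lambda z_\gamma$ produces no cancellation, namely $\abs{z}=\lambda\abs{z_\gamma}+\abs{z-\lambda z_\gamma}$ and $\abs{\Div z}=\lambda\abs{\Div z_\gamma}+\abs{\Div(z-\lambda z_\gamma)}$. One produces $\gamma$ by following the normalized field $\frac{\diff z}{\diff\abs{z}}$ out of the sources: if no injective trajectory of that field reached a sink, the indicator of the reachable set would be an admissible ``cut'' forcing $\Div z$ to have a sign on its boundary incompatible with $\nu^\pm$, a contradiction. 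Making this rigorous requires mollifying ($z*\rho_\varepsilon$), taking trajectories of the resulting smooth field, and passing to the limit, using compactness of $\Lipd$ to extract the limiting curve and $\abs{z}(\R^d)<\infty$ to control its length; loops are then excised to make $\gamma$ simple, and $\lambda$ is chosen maximal. Finiteness of the length and $\Spt z_\gamma\subseteq Q$ follow from $\lambda\,\Hh^1(\gamma)=\lambda\abs{z_\gamma}(\R^d)\le\abs{z}(\R^d)$ and $\gamma\subseteq Q$.

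\textbf{Iteration, limit, and the measure $\sigma$.} Applying this to the successive residuals $z^{(0)}=z$, $z^{(n)}=z^{(n-1)}-\lambda_n z_{\gamma_n}$ gives $z=\sum_{i\le n}\lambda_i z_{\gamma_i}+z^{(n)}$, and the no-cancellation identities telescope: at every finite stage $\abs{z}=\sum_{i\le n}\lambda_i\abs{z_{\gamma_i}}+\abs{z^{(n)}}$ and $\abs{\Div z}=\sum_{i\le n}\lambda_i\abs{\Div z_{\gamma_i}}+\abs{\Div z^{(n)}}$ as measures, the atoms never cancelling because they live in the mutually singular sets $\Spt\nu^+$ and $\Spt\nu^-$, with $\abs{\Div z_{\gamma_i}}(\R^d)=2$ since each $\gamma_i$ is not closed. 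Hence $\sum_i\lambda_i<\infty$, so $\sigma:=\sum_i\lambda_i\delta_{\gamma_i}$ is a finite Borel measure on the compact metric space $\Lipd$, and, along a weak-$*$ convergent subsequence, $q:=\int z_\gamma\,\diff\sigma(\gamma)$ and $p:=z-q$ are normal charges supported in $Q$. Lower semicontinuity of the total variation, combined with the exact additivity at each finite stage, upgrades the limiting inequalities to equalities and yields $\abs{z}=\abs{p}+\abs{q}$, $\abs{\Div z}=\abs{\Div p}+\abs{\Div q}$, $\abs{q}=\int\abs{z_\gamma}\,\diff\sigma(\gamma)$ and $\abs{\Div q}=\int\abs{\Div z_\gamma}\,\diff\sigma(\gamma)$. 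Finally $\Div p=0$, for otherwise the extraction step would furnish a further curve, contradicting the maximality built into the construction (realized, to be safe, by a Zorn-type argument over candidate decompositions, with a measurable-selection argument needed to realize the extracted family as a genuine Borel measure on $\Lipd$).

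\textbf{Main obstacle.} The two genuinely hard points are the reachability (max-flow/min-cut) argument behind the extraction step, carried out in the measure-theoretic setting where the ``reachable set'' is not a priori a regular set and one must regularize and pass to the limit while keeping the limiting curve nontrivial and subordinate to $z$; and the verification that the greedy procedure exhausts all of $\Div z$, which is what forces the maximality argument together with the measurable selection packaging the curves into $\sigma$. These are precisely the parts where Smirnov's proof concentrates its effort.
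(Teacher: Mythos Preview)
The paper does not prove this theorem at all: it is stated as a citation of \cite[Thm.~C]{smirnov} and used as a black box (the appendix only proves Theorem~\ref{prop:smirnov}, which \emph{invokes} this result). So there is no ``paper's own proof'' to compare against; your outline is effectively a sketch of Smirnov's argument rather than of anything in this paper.

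As for the sketch itself, it is a plausible heuristic but does not match Smirnov's actual route. Smirnov does not perform a greedy one-curve-at-a-time extraction. His construction mollifies $z$ to obtain a smooth field, follows the integral curves of that field (which foliate the domain), disintegrates the mollified charge along this family of trajectories to obtain a measure on $\Lipd$, and then passes to the limit as the mollification parameter vanishes, using compactness of $\Lipd$ and lower semicontinuity arguments. The greedy picture you describe runs into a genuine obstacle you partly anticipate: there is no reason countably many extracted curves exhaust $\Div z$, and upgrading to an uncountable family via a ``Zorn-type argument'' while simultaneously packaging the result as a bona fide Borel measure $\sigma$ on $\Lipd$ is not something a measurable-selection lemma alone provides. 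Your max-flow/min-cut reachability intuition for producing a single subordinate curve is appealing but is not how the argument is actually organized. Since the paper itself defers entirely to \cite{smirnov}, the appropriate thing here is simply to cite the result rather than attempt a sketch.
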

The ``conic combinations'' in (\ref{eq:smirnovz}-\ref{eq:smirnovdiv}) are in the weak-* sense, e.g.
\begin{align}
	\forall \varphi \in  C_c(\R^d),\quad  \langle q, \varphi\rangle &=\int \langle \zgamma, \varphi\rangle \diff \sigma(\gamma).
\end{align}

In turn, $p$ can also be completely decomposed as well, but the decomposition may involve elementary solenoids, that is charges induced by \emph{generalized curves} (see \cite[Def. and Thm. B]{smirnov}).
A generalized curve may consist in an infinitely winding curve of infinite length, but since we
show below that these objects do not appear in the solutions of our variational problem, we do not
describe them more precisely. See~\cite{smirnov} for more detail.


An alternative interpretation of Smirnov's results is that the extreme points of the unit ball of $\|\cdot\|_\mathcal{V}$ are the measures supported on simple oriented rectifiable curves, of norm one.
It has led Laville \textit{et al.} \cite{laville_smirnov,laville_algo} to use \eqref{continuous_norm} as a regularizer in inverse problems, as they prove that some solutions of their variational problem are superpositions of a finite number of measures induced by simple curves.



\subsection{A minimization problem with prescribed divergence}
Here, we focus on solving problems of the form
\begin{equation}\label{eq:ProblemContinuous2}
\min_{{z}} \left\{ \int_Q g\big(x,\tfrac{z}{|z|}\big)d|z|(x) : -\Div z = \mu\right\}
\end{equation}
where $\mu$ is a given measure with support in $Q$ and $\mu(Q)=\int_Q d\mu = 0$. This kind of problem is known in the field of branched transportation as Beckmann's problem (see for instance~\cite[Sec. 4.2]{santambrogio_optimal_2015} or~\cite{LOHMANN2022739}).
We make the following assumptions on the weight $g$.
\paragraph{Assumptions:}
\begin{enumerate}
    \item[(A0)] There exists $D>0$ such that for every $(x,y)\in Q^2$, there exists a Lipschitz curve $\gamma\colon [0,1]\rightarrow Q$ with $\gamma(0)=x$, $\gamma(1)=y$, and
    \begin{align}\label{eq:arcwiseD}
        \int_0^1 g(\gamma(t),\gamma'(t))\diff t \leq D.
    \end{align}
    \item[(A1)] The function $g\colon \R^d\times \R^d\rightarrow [0,+\infty]$ is lower semi-continuous, and it is convex positively 1-homogeneous in the second variable.
    \item[(A2)] There exists some constant $c>0$ such that $g(x,t)\geq c$ for all $(x,t)\in Q\times \SB^{d-1}(\R)$, where $\SB^{d-1}(\R)$ denotes the unit sphere of $\R^d$.
\end{enumerate}



The following result is a continuous version of known and standard results in graph theory and network flows (see also the discussion below~\eqref{eq:ProblemNaive}).
\begin{theorem}\label{prop:smirnov} Let $\mu\in\mathcal{M}(\R^d)$ with support in $Q$ and $\mu(Q)=0$, and assume that (A0), (A1) and (A2) hold. 

	Then, there exists a minimizer $z$ to~\eqref{eq:ProblemContinuous2}, and for any such minimizer there exists a Borel positive measure $\sigma$ defined on $\Lipd$, the set of oriented $1$-Lipschitz curves such that
\begin{align}
	z&\!=\! \int \zgamma \diff \sigma(\gamma), \label{eq:smirnovdecompoA}\\
	\abs{z} &\!=\! \int \abs{\zgamma} \diff\sigma(\gamma), \label{eq:smirnovdecompoB}\\
		\abs{\Div (z)} &\!=\! \int \abs{\Div(\zgamma)} \diff\sigma(\gamma), \label{eq:smirnovdecompoC}\\
    \int g(x,z) &= \int \left( \int g(x,\zgamma)\right)\diff \sigma(\gamma).\label{eq:smirnovdecompoD}
\end{align}
Moreover, $\sigma$-a.e. $\gamma$ is open, simple, and
\begin{enumerate}[(i)]
    \item\label{point1} $\gamma$ is a weighted geodesic curve in $Q$, \textit{i.e.}, minimizes  $\int g(\gamma(t),\gamma'(t)) dt$ for fixed beginning point $b(\gamma)$ and endpoint $e(\gamma)$,
    \item\label{point2} $b(\gamma)\in \Spt \mu^-$ and $e(\gamma)\in \Spt \mu^+$ where $\mu= \mu^+ - \mu^-$ is the Hahn-Jordan decomposition of $\mu=-\Div z$.
\end{enumerate}
\end{theorem}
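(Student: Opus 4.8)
I would establish the statement in four stages: existence of a minimizer, the curve decomposition with energy additivity, the geodesic property, and the localisation of the endpoints.

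\textbf{Existence.} The plan is to run the direct method. First I would check that the feasible set $\{z\in\mathcal V:-\Div z=\mu\}$ contains a charge of finite energy. Write $\mu=\mu^+-\mu^-$ for the Hahn--Jordan decomposition; since $\mu(Q)=0$ the two parts share the mass $m:=\mu^+(Q)=\mu^-(Q)$, and we may assume $m>0$ (otherwise $\mu=0$ and $z=0$ works). Set $\pi:=\tfrac1m\mu^-\otimes\tfrac1m\mu^+$, a probability on $Q\times Q$, select by a measurable choice a Borel family $(x,y)\mapsto\gamma_{x,y}$ of Lipschitz curves joining $x$ to $y$ in $Q$ with $\int_0^1 g(\gamma_{x,y},\gamma_{x,y}')\leq D$, as granted by (A0), and put $z_0:=m\int z_{\gamma_{x,y}}\,\diff\pi(x,y)$. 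Then $-\Div z_0=m\bigl(\tfrac1m\mu^+-\tfrac1m\mu^-\bigr)=\mu$ and, by subadditivity of the sublinear functional $G$ along a superposition, $G(z_0)\leq m\int\bigl(\int g(\cdot,z_{\gamma_{x,y}})\bigr)\diff\pi\leq mD<\infty$. Next, (A2) gives the coercivity bound $G(z)\geq c\,\abs{z}(Q)=c\|z\|_{\mathrm{TV}}$, so any minimizing sequence is bounded in total variation and, its divergence being the fixed measure $-\mu$, bounded in $\mathcal V$; extracting a weak-$*$ limit $z_n\rightharpoonup z$ in $\mathcal M(\R^d)^d$, the constraint $-\Div z=\mu$ passes to the limit (test $\Div z_n$ against $C^\infty_c$ functions), and Reshetnyak's lower semi-continuity theorem recalled in Section~\ref{sec:spacecharges} gives $G(z)\leq\liminf_n G(z_n)$, so $z$ is a minimizer.

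\textbf{Eliminating the divergence-free part, and energy additivity.} Fix any minimizer $z$ and apply Theorem~\ref{thm:smirnov}: there are $p,q\in\mathcal V$ with $z$ completely decomposing into $p$ and $q$, $\Div p=0$, and $q=\int\zgamma\,\diff\sigma(\gamma)$ satisfying \eqref{eq:smirnovz}--\eqref{eq:smirnovdiv} for a finite nonnegative Borel measure $\sigma$ on $\Lipd$. From $\abs{z}=\abs{p}+\abs{q}$ we have $\abs{p},\abs{q}\ll\abs{z}$; writing all three in polar form against $\abs{z}$, the identity $\bigl|\tfrac{\diff z}{\diff\abs{z}}\bigr|=1=\tfrac{\diff\abs{p}}{\diff\abs{z}}+\tfrac{\diff\abs{q}}{\diff\abs{z}}$ forces $\tfrac{\diff p}{\diff\abs{z}}$ and $\tfrac{\diff q}{\diff\abs{z}}$ to be nonnegative multiples of $\tfrac{\diff z}{\diff\abs{z}}$ for $\abs{z}$-a.e.\ $x$, so by positive $1$-homogeneity of $g$ in the second variable $G(z)=G(p)+G(q)$. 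If $p\neq0$ then $G(p)\geq c\,\abs{p}(Q)>0$ by (A2) and $q$ would be feasible, with $G(q)<G(z)$, contradicting minimality; hence $p=0$, $z=q$, and \eqref{eq:smirnovdecompoA}--\eqref{eq:smirnovdecompoC} hold. For \eqref{eq:smirnovdecompoD}, ``$\leq$'' is again subadditivity of $G$ along $\sigma$; for ``$\geq$'' I would introduce $\lambda:=\int(\delta_\gamma\otimes\abs{\zgamma})\,\diff\sigma(\gamma)$ on $\Lipd\times Q$, whose marginal on $Q$ is $\abs{z}$ by \eqref{eq:smirnovdecompoB}, disintegrate it as $\lambda=\int\lambda_x\,\diff\abs{z}(x)$, observe that $\int\tau_\gamma(x)\,\diff\lambda_x(\gamma)=\tfrac{\diff z}{\diff\abs{z}}(x)$ for $\abs{z}$-a.e.\ $x$ by \eqref{eq:smirnovdecompoA} (with $\tau_\gamma$ the tangent of the simple curve $\gamma$, defined $\abs{\zgamma}$-a.e.), and apply Jensen's inequality in the convex second variable of $g$.

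\textbf{Geometry of the curves.} By Theorem~\ref{thm:smirnov}, $\sigma$-a.e.\ $\gamma$ is simple of finite length. A closed curve contributes $0$ to the divergence but, by (A2), a positive amount to $\int g(\cdot,\zgamma)$ unless it is constant, so removing all closed and constant curves from $\sigma$ keeps $-\Div z=\mu$ while not increasing $G(z)$ (by the additivity \eqref{eq:smirnovdecompoD}); minimality then lets us assume $\sigma$-a.e.\ $\gamma$ is open with $b(\gamma)\neq e(\gamma)$. For the geodesic property~(i), suppose the set $S$ of $\gamma$ that are not $g$-geodesics between $b(\gamma)$ and $e(\gamma)$ had $\sigma(S)>0$; under (A0) and (A2), $g$-geodesics between any two points of $Q$ exist (bounded $g$-length forces, via (A2), bounded Euclidean length, whence Arzel\`a--Ascoli and lower semi-continuity of the length functional yield a minimizer) and can be chosen Borel-measurably, so replacing $\gamma$ by such a geodesic $\tilde\gamma$ for $\gamma\in S$ gives a feasible charge whose energy, computed through \eqref{eq:smirnovdecompoD}, is strictly smaller --- contradiction. (Alternatively, strong duality for \eqref{eq:ProblemContinuous2} yields an optimal Kantorovich-type potential $u$, and complementary slackness forces every $\gamma$ of the decomposition to realise $u(e(\gamma))-u(b(\gamma))$, i.e.\ to be a geodesic, thus avoiding measurable selection.) Finally, for the endpoint property~(ii): from $-\Div z=\mu$ with \eqref{eq:smirnovdecompoA} and \eqref{eq:smirnovdecompoC} one reads off $\mu=\int\delta_{e(\gamma)}\,\diff\sigma-\int\delta_{b(\gamma)}\,\diff\sigma$ and $\abs{\mu}=\int(\delta_{b(\gamma)}+\delta_{e(\gamma)})\,\diff\sigma$; uniqueness of the Hahn--Jordan decomposition then forces $\mu^+=\int\delta_{e(\gamma)}\,\diff\sigma$ and $\mu^-=\int\delta_{b(\gamma)}\,\diff\sigma$, and since $\mu^\pm$ put no mass on the open set $\R^d\setminus\Spt \mu^\pm$, we get $e(\gamma)\in\Spt \mu^+$ and $b(\gamma)\in\Spt \mu^-$ for $\sigma$-a.e.\ $\gamma$.

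\textbf{Main obstacle.} The conceptual heart --- ruling out the divergence-free (possibly solenoidal) part $p$ --- is short once (A2) is in hand. I expect the genuine difficulties to be the measurability bookkeeping on Smirnov's curve space $\Lipd$ (the measurable selections of connecting curves for existence and of geodesics in step~(i)) and making \eqref{eq:smirnovdecompoD} fully rigorous through the disintegration of $\lambda$ and the equality case of Jensen's inequality. Should one wish to dispense with measurable selection in step~(i), the convex-duality route is cleaner, but it then requires a careful proof of strong duality for this anisotropic, possibly $+\infty$-valued, Beckmann-type problem.
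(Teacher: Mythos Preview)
Your overall architecture matches the paper's: direct method for existence, Smirnov's Theorem~C plus (A2) to kill the divergence-free part, a measurable selection of geodesics to obtain~(i), and the $S,R$ computation for~(ii). Two points deserve correction or comment.

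\smallskip
\textbf{The energy identity \eqref{eq:smirnovdecompoD}.} Your labeling of the two inequalities is reversed. Subadditivity (equivalently, Jensen for the sublinear functional $G$) gives
\[
G(z)=G\!\left(\int \zgamma\,\diff\sigma\right)\ \le\ \int G(\zgamma)\,\diff\sigma,
\]
which is ``$\le$'', not ``$\ge$''; invoking Jensen again through your disintegration of $\lambda$ yields the same direction. What actually makes your disintegration work is a different observation: with $\lambda_x$ a probability on $\Lipd$, $\int \tau_\gamma(x)\,\diff\lambda_x(\gamma)=\tfrac{\diff z}{\diff\abs{z}}(x)$ is a \emph{unit} vector expressed as an average of unit vectors $\tau_\gamma(x)$, hence $\tau_\gamma(x)=\tfrac{\diff z}{\diff\abs{z}}(x)$ for $\lambda$-a.e.\ $(\gamma,x)$. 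Plugging this into $\int G(\zgamma)\,\diff\sigma=\int g(x,\tau_\gamma(x))\,\diff\lambda$ gives equality directly, with no Jensen needed. The paper takes a different route for ``$\ge$'': it refines a Borel partition of the curve space, uses the pairwise additivity of $G$ under $\abs{p+q}=\abs{p}+\abs{q}$ (the same colinearity argument you used for $p,q$), and passes to the limit via Fatou and the weak-$*$ lower semi-continuity of $G$. Your disintegration argument, once repaired, is arguably cleaner.

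\smallskip
\textbf{Feasibility.} Assumption~(A0) gives, for each pair $(x,y)$, \emph{some} curve with $g$-length $\le D$, but not a Borel family $(x,y)\mapsto\gamma_{x,y}$; your construction of $z_0$ via the product coupling $\pi$ silently assumes such a measurable selection. One can certainly prove it (as you later do for geodesics), but the paper sidesteps the issue entirely: it approximates $\mu^\pm$ by empirical measures $\tfrac1n\sum\delta_{x_{i,n}}$, joins the finitely many pairs by (A0)-curves, and passes to a weak-$*$ limit using (A2) for tightness. This avoids any selection argument at the feasibility stage and is worth knowing as a simpler alternative.
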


The proof of Theorem~\ref{prop:smirnov} is given in Appendix~\ref{apx:decompo}.

In the particular case where $\mu$ is a sum of Dirac masses $\sum_i \pm\delta_{x_i}$, the solution
is carried by geodesic curves with endpoints on the $x_i$'s (oriented from the negative to the positive Diracs).

In \cite{laville_algo}, Laville~\textit{et al.}~exploit a
representer theorem which bounds the number of curves to solve their problem in an ``off-the-grid'' greedy approach relying on the Frank-Wolfe algorithm. The implementation is non-trivial, and requires a non-convex step. Since our setting possibly yields many more curves, we rather consider a convex approach based on a discretization. The difficulty arises in properly discretizing singular vector fields.
\begin{remark}
Contrary to what is written in~\cite{SSVMArthaudCD2025}, 
in general, the measure $\sigma$ is \emph{not} a probability measure.
\end{remark}
We now introduce a discretized version of these normal
charge, in order to use this representation in practical
image analysis to represent 1D curves in images. This
raises a few difficulties since in particular, a discrete
version of Theorem~\ref{prop:smirnov} does not exist
in general, except for very elementary anisotropic
curve energies.

\section{Discrete curves}\label{sec:Discrete}
In this section, we describe different discretization strategies and discuss their performance. 
To make the description easier to follow, we
first consider the bi-dimensional case (the generalization to 3D will be
straightforward) and
 weights which do not depend on the orientation. Orientation-dependent weights (Section~\ref{sec:curvature}) will require
some additional caution.


We consider 2D images of $N \times M$ pixels,
and we introduce $\mathcal{N}=\{(i,j):1\le i\le N, 1\le j\le M\}$ the set of nodes and $\mathcal{E}=\{(i+\frac{1}{2},j): 1\le i\le N-1,1\le j\le M\} \cup \{(i,j+\frac{1}{2}): 1\le i\le N,1\le j\le M-1\}$ the set of edges between the neighbouring nodes (where $(i+\frac{1}{2},j)$ denotes
the edge between the nodes $(i,j)$ and $(i+1,j)$, etc).

\subsection{Flux on a graph}
Playing the same role as the space of normal charges above, the space of discrete vector fields is defined as
\begin{equation}
    \mathcal{V}_d  = \mathbb{R}^{\mathcal{E}} \simeq (\mathbb{R})^{(N-1) \times M}
    \times (\mathbb{R})^{N \times (M-1)}
\end{equation}
For a vector field $v\in\mathcal{V}_d$, the
component $v_{i+\frac{1}{2},j}$ may be seen both
as a field between $(i,j)$ and $(i+1,j)$ (on the
edge), and a flux through the facet separating
the two pixels.

An important case of a vector field is the ``gradient'' of an image $u \in \mathbb{R}^\mathcal{N}\simeq\mathbb{R}^{N \times M}$, defined as the following finite difference operator:
\begin{equation}\label{eq:defDu}
    (D u)_{i+\frac{1}{2},j} = u_{i+1,j} - u_{i,j},
    \quad (Du)_{i,j+\frac{1}{2}} =  u_{i,j+1} - u_{i,j}.
\end{equation}
In this convention, the ``horizontal'' derivative
$(D u)_{i+\frac{1}{2},j}$
is defined on the edge between $(i,j)$ and $(i+1,j)$, for $i=1,\dots,N-1$, $j=1,\dots, M$
and similarly the ``vertical'' derivative is
defined on the edge between $(i,j)$ and $(i,j+1)$, for $i=1,\dots,N$ and $j=1,\dots,M-1$. 
Then, a discrete divergence operator is naturally
defined, for $v\in\mathcal{V}_d$, as $D^*v\in \R^\mathcal{N}$, given by $\sca[\R^\mathcal{N}]{D^*v}{u}=\sca[\mathcal{V}_d]{v}{Du}$ for all $u$ (with the scalar products given
by the canonical Euclidean products on the respective spaces).
With our convention in the definition of $D$ (we only
compute a difference when the two points are in the discrete
domain),
$D^*$ is a divergence with vanishing flux condition
on the boundary.

The advantage of this setting is its similarity with the continuous one (Sec.~\ref{sec:charges}). A straightforward adaptation of~\eqref{eq:ProblemContinuous2} is
\begin{equation}\label{eq:ProblemNaive}
    \min_{z \in \mathcal{V}_d, \, D^*z = \mu} \sum_{e\in \mathcal{E}} g_{e} |z_{e}|
\end{equation}
where $(g_e)_{e\in \mathcal{E}}$ is a collection of positive weights. It benefits from a discrete counterpart to Theorem~\ref{prop:smirnov}, which comes from well known  results in graph theory (see, e.g.~\cite[Prop.~3.10 and Rem.~3.12]{bonnans_gaubert}).

Nevertheless, as we illustrate below
(Fig.~\ref{fig:comma_l1l2}, center), this $\ell^1$-norm yields blurry results, due to the non-uniqueness of the
 corresponding geodesics. In addition, it measures the
``length'' of the loops or curves in a
very anisotropic way, only through their
horizontal and vertical projections.
In the active contour community, that phenomenon is well known, and people prefer to use discretizations of the isotropic Eikonal equation together with fast marching approaches over Dijkstra's  algorithm on the graph of the image~\cite{Cohen1996GlobalMF} in order to compute
geodesic.
\begin{figure}[htb]
    \centering
    \includegraphics[width=0.325\textwidth]{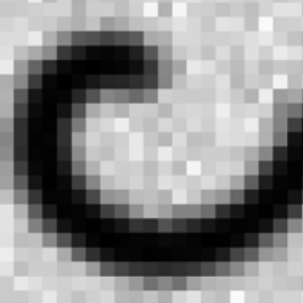}
      \includegraphics[width=0.325\textwidth]{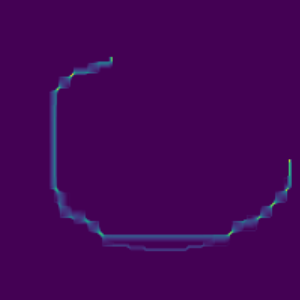}
    \includegraphics[width=0.325\textwidth]{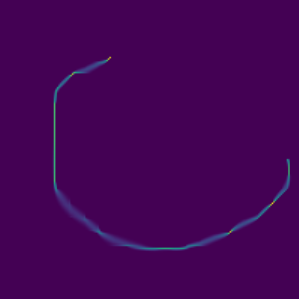}
    \caption{Left: $200 \times 200$ ``noisy comma'' shape
    (crop from~\cite[Fig.~1]{laville_algo}), middle: result with weighted $\ell^1$ norm, right: weighted $\ell^2$ norm.}
    \label{fig:comma_l1l2}
\end{figure}

One has therefore to consider other norms,
consistent (as $M,N\to\infty$) with the Euclidean
norm in~\eqref{eq:ProblemContinuous2}, even
if in doing so we lose at the discrete level
the decomposition theorem of a field as a
superimposition of curves.

\subsection{Isotropic discretizations}\label{isotropic_disc}
We introduce a family of discrete approximations of
the convex curve reconstruction problem~\eqref{eq:ProblemContinuous2}, with different anisotropy properties.
Given a weight function $g\in \mathbb{R}^{\mathcal{N}}$ with $0 \leq g_{i,j} \leq 1$ (derived from the gray level values of an image), and the scalar field of curve endpoints $\mu \in \mathbb{R}^{\mathcal{N}}$, the general form of
our problem is:
\begin{equation}\label{eq:ProblemDiscrete}
    \min_{z \in \mathcal{V}_d, \, D^*z = \mu} \sum_{i,j} g_{i,j} \|(Az)_{i,j}\|,
\end{equation}
where $A$ is an operator which sends $z$
to a $\R^2$-valued field in $\R^{\mathcal{N}}$
and $\|\cdot\|$ a norm in $\R^2$. For instance,
the weighted counterpart of the $\ell^1$-norm $\|z\|_1$
(as in~\eqref{eq:ProblemNaive}) corresponds to choosing
\begin{equation}\label{eq:A1}
(Az)_{i,j} = \begin{pmatrix} z_{i+\frac{1}{2},j} \\ z_{i,j+\frac{1}{2}} \end{pmatrix}, \quad \mbox{and}\ \forall x\in \R^2,\
\|x\| = |x_1|+|x_2|.
\end{equation}

By solving~\eqref{eq:ProblemDiscrete} we
 extract, through the discrete vector field $z$, dark curves (where $g$ is close to zero) on a light background (where $g$ is close to one). An experimental result with~\eqref{eq:A1} is shown in Fig.~\ref{fig:comma_l1l2}, middle: the weighted $\ell^1$ result is quite blurry, due to the fact
that several curves have almost the same energy, and that
the optimization outputs a convex combination
of these. In addition, horizontal
and vertical lines are strongly favored by the energy.


A variant yielding sharper and more isotropic results consists in replacing
the $\ell^1$-norm in~\eqref{eq:A1} with the Euclidean
norm $\|x\|=\sqrt{x_1^2+x_2^2}$. This corresponds
to a consistent discretization of~\eqref{eq:ProblemContinuous2} with forward
differences.
The result is shown in Fig.~\ref{fig:comma_l1l2}, right. The arbitrary choice of forward-forward differences for the gradient still induces some anisotropy at small scale. This is clearly seen when comparing the bottom left and the bottom right of the  comma, the first being more blurry than the second (see also Fig.~\ref{fig:commas_zoom}, top right).

A possibility to correct this anisotropic
behaviour and obtain a sharper result
is through averaging. As already
mentioned, in the spirit of discrete calculus \cite{grady}, the discrete vector field is living on the edges between the cells of the pixels, while the scalar fields of the grayscale levels or of the curve extremities should be seen as living on these cells. To re-center the norms of the $z_{i,j}$ on the cells, we replace in~\eqref{eq:A1} the operator
$A$ by a true averaging operator:
\[
(Az)_{i,j} = \frac{1}{2}\begin{pmatrix} z_{i-\frac{1}{2},j}+ z_{i+\frac{1}{2},j} \\ z_{i,j-\frac{1}{2}} +z_{i,j+\frac{1}{2}} \end{pmatrix},
\]
and using the $\ell^2$-norm again in~\eqref{eq:ProblemDiscrete}. This yields 
the discrete problem:
\begin{equation} \label{isotropic_2d}
    \min_{z \in \mathcal{V}_d, \, D^*z = \mu} \sum_{(i,j)\in\mathcal{N}} g_{i,j} \sqrt{\left(\frac{z_{i-\frac{1}{2},j} + z_{i+\frac{1}{2},j}}{2}\right)^2 +  \left(\frac{z_{i,j-\frac{1}{2}} + z_{i,j+\frac{1}{2}}}{2}\right)^2}.
\end{equation}
Notice that here, we have defined the coefficients outside of the image with zero padding (that is, $z_{\frac{1}{2},j} =z_{N+\frac{1}{2},j}= z_{i,\frac{1}{2}}=z_{i,M+\frac{1}{2}} = 0$),
which is consistent with the vanishing flux condition on the boundary.
\begin{figure}[htb]
    \centering
    \includegraphics[width=0.32\textwidth]{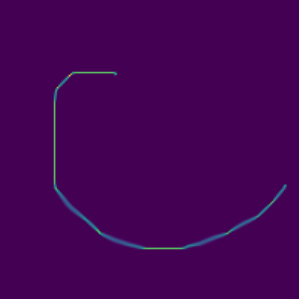}
    \begin{minipage}[b]{0.51\textwidth}
    \includegraphics[trim=25 20 10 90, clip, width=\textwidth]{virgule_B.pdf}
    \includegraphics[trim=25 20 10 90, clip, width=\textwidth]{virgule_C.pdf}
    \end{minipage}
    \caption{Left: result with the weighted and averaged $\ell^2$-norm  on the same ``noisy comma'' image. Right: zoom of the same result (bottom) and zoom without averaging (top).}
    \label{fig:commas_zoom}
\end{figure}
The results are very isotropic and sharper than those obtained through the weighted $\ell^2$ norm, as seen in Fig.~\ref{fig:commas_zoom}, left. A zoom on the bottom part  compares these two versions (Fig.~\ref{fig:commas_zoom}, right).
\begin{figure}[htb]
\centering
    \includegraphics[angle=90,width=.49\textwidth,clip, trim= 40 0 50 0]{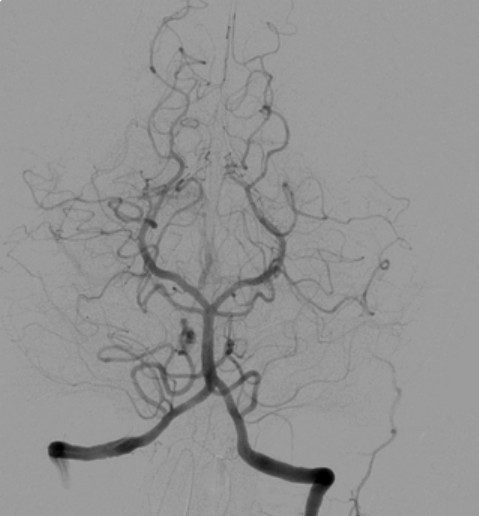}
    \includegraphics[angle=90,width=.49\textwidth,clip, trim= 40 0 50 0]{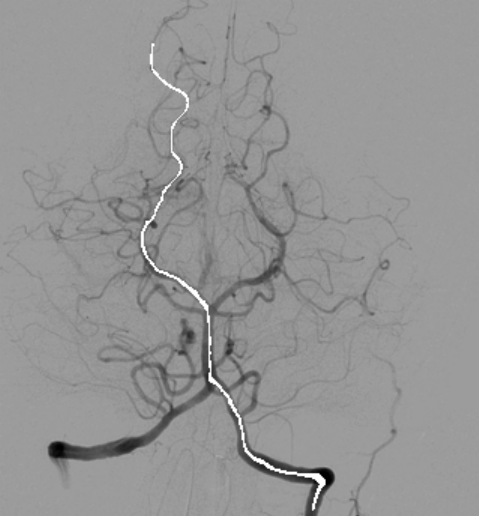}
    \caption{Left: an angiogram (source: Wikipedia), right: geodesic computed by our method (compare
    for instance with~\cite[Fig.~4(d)]{DeschampsCohen}).}
    \label{fig:resultat_angio}
\end{figure}
In Fig.~\ref{fig:resultat_angio}, we illustrate the use of this method on an angiogram. The result is a 
geodesic similar to the results obtained in the literature on minimal paths~\cite{DeschampsCohen}.


\subsection{Optimization}
All of the experimental results exposed in this document have been implemented with a primal-dual algorithm \cite{chambolle_pock}, or ``PDHG'', {accelerated
by a reprojection on the constraint
$D^*z=\mu$ computed with FFTW3~\cite{FFTW3}}, 
which solves the Lagrangian saddle point problem:
\begin{equation}\label{eq:saddlePD}
\min_{z \in \mathcal{V}_d} \max_{p \in (\mathbb{R}^\mathcal{N})^2} \langle A z, p \rangle + \car{\{z\,:\,D^*z = \mu\}} - \car{\{p\,:\, \|p_{i,j}\|_* \leq g_{i,j}\, \forall i, j, \}}.
\end{equation}
Here, $\|\cdot\|_*$ is the dual norm of the
norm $\|\cdot\|$ in~\eqref{eq:ProblemDiscrete}
(defined by $\|q\|_* = \sup_{\|p\|\le 1} q\cdot p$),
and $\chi$ denotes a characteristic function
in the classical sense of convex analysis
($0$ if the condition is satisfied, $+\infty$ else).
The PDHG algorithm
described in
Appendix~\ref{sec:PD} is elementary to implement
and only requires the matrix-vector products with
$A$ and its adjoint $A^*$, and
the knowledge of the projections onto the sets whose
characteristic function appear in~\eqref{eq:saddlePD}.
The main advantage of this approach is its versatility: it is straightforward to adapt to any
linear operator $A$, or
to change the penalization of $(Az)_{i,j}$ by
suitably modifying the constraint set
for $p$ (here, $\|p_{i,j}\|_* \leq g_{i,j}$;
more complex convex, one-homogeneous penalizations
will be considered in Section~\ref{sec:curvature}). See the
full description in Appendix~\ref{sec:PD}.

The idea of using an averaging operator comes
from the context of discretizing the total
variation, and was introduced in~\cite{condat}
and further developed in~\cite{chambolle_learning}.
An interesting topic for research would be to analyze and
reproduce more closely their
findings, or develop variants in the context of curve extraction.
\begin{figure}[bht!]
    \centering
    \includegraphics[width=0.3\textwidth]{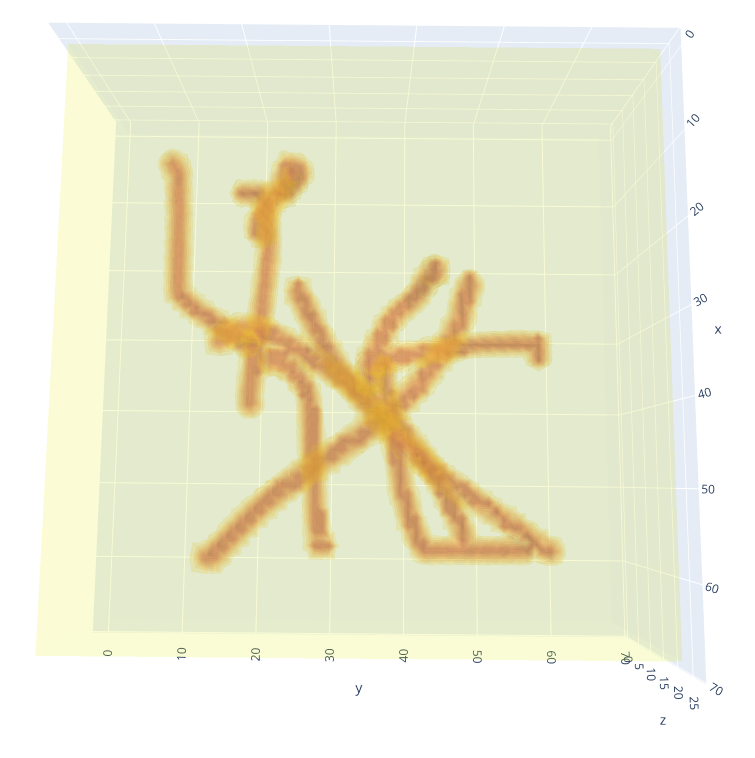}
    \includegraphics[width=0.3\textwidth]{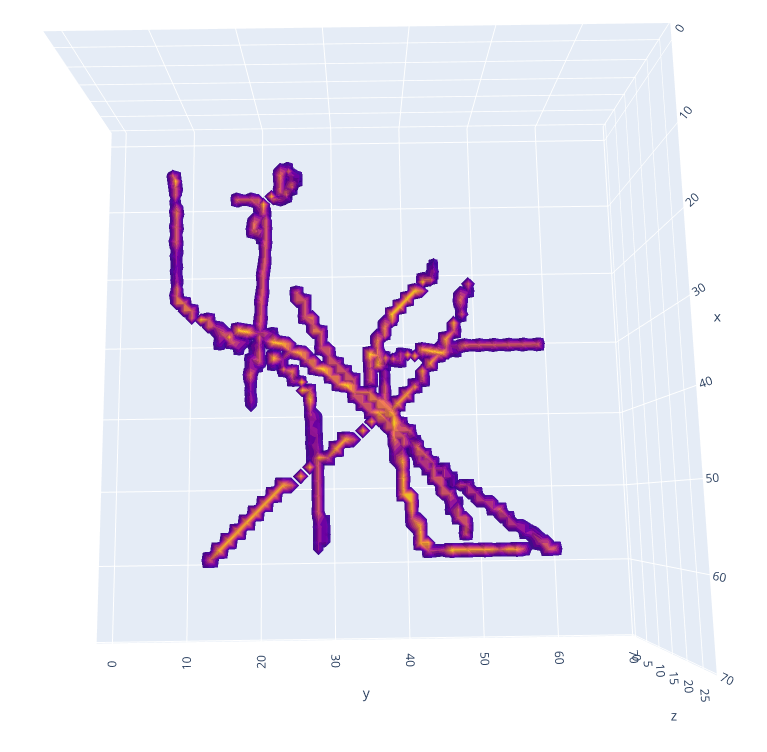}
    \includegraphics[width=0.4\textwidth]{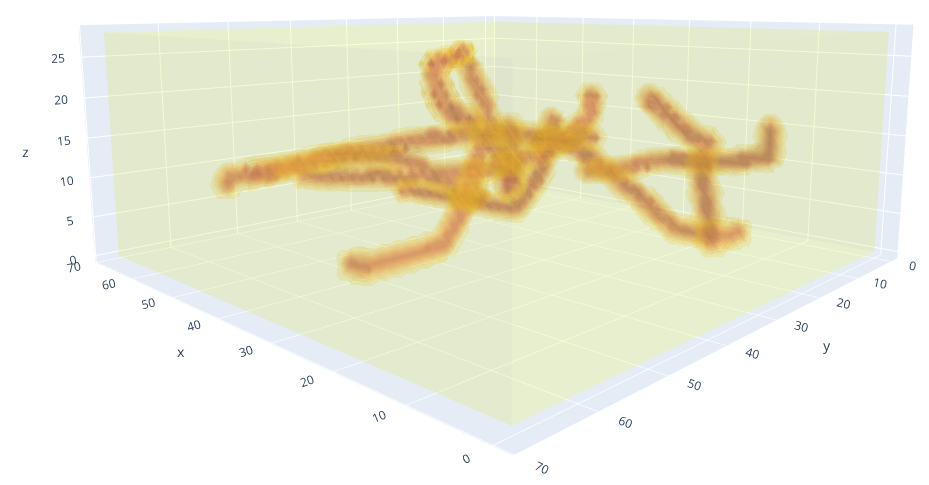}
    \includegraphics[width=0.4\textwidth]{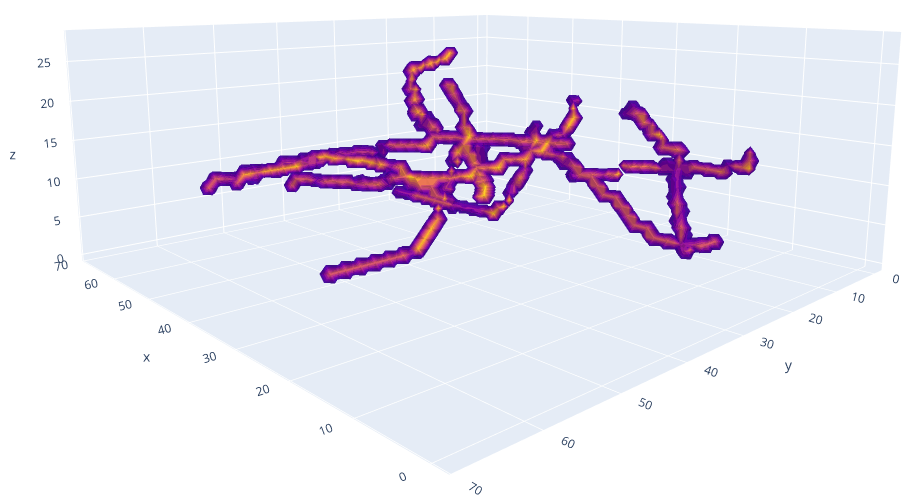}
    \caption{Result on a 3D potential: the left images correspond to the $70 \times 70 \times 30$ potential, and the right images are the retrieved curves. Now the
    result is obtained after 2000 primal-dual steps, with fixed endpoints, for a wall time of about 6 min.}
    \label{fig:resultat_3D}
\end{figure}

The extension to a 3D volume of this method is
straightforward, with the same formulas, the same averaging
and the same algorithm.
We show an example (on synthetic data
and as before with given endpoints) in
Figure~\ref{fig:resultat_3D}.

\section{An iterative discrete curve reconstruction algorithm} \label{sec:Iterative}
\subsection{Finding the endpoints}\label{sec:EndPoints}
The discrete convex curve reconstruction problem of the previous section assumes that the endpoints (represented by discrete Dirac masses) of the curves to reconstruct were given. Now, we propose an iterative algorithm which attempts to retrieve discrete curves in images with no prior knowledge of their positions. The only parameter are the
maximal number of endpoints and a threshold $g_{max}$, chosen so that the
curves should roughly describe the set $\{g\le g_{max}\}$.
Formally, given $g$ and $g_{max}$, and a maximum number of source/sink pairs  $n_0$,
we aim at solving the non-convex bi-level optimization problem (here $Q$ is a rectangle representing the domain of the image):
\begin{equation} \label{Non convex prob}
\begin{aligned}
\min_{\begin{subarray}{c}1\le n\le n_0 \\ (s_i,t_i)_{i=1}^n \in Q^{2n}\end{subarray}} &\left\{ \int_Q (g\left(x,\frac{\diff z}{\diff \abs{z}}(s)\right)-g_{max})d|z|(x):\right. \\
& \left. \qquad\qquad z\text{ solves~\eqref{eq:ProblemContinuous2} for }
\mu=\sum_{i=1}^n \delta_{t_i} - \sum_{i=1}^n \delta_{s_i}\right\}.
\end{aligned}
\end{equation}
The idea is to try to find the endpoints of the curves present in the image, by maximizing the length of the minimal curves between these endpoints inside the sublevel set $\{g \leq g_{max}\}$.

To implement this minimization, we need to describe the gradient with respect to
a source/sink point $s_i$ or $t_i$ of the energy
in~\eqref{Non convex prob}. Since the optimal
measure is expected to be carried by  geodesic curves joining each $s_i$ to some point $t_j$,
the largest change for the energy corresponds
to moving $s_i$, $t_j$ \textit{along} the geodesic
(that is, in direction $\pm z(s_i)$ or $\pm z(t_i)$), towards the direction which \textit{shortens} the curve if $g(x)>g_{max}$, and
the direction which \textit{lenghtens} the curve
if $g(x)<g_{max}$. At equilibrium, we also
check whether moving the sources/sinks in the direction
of $\nabla g$ improves the criterion.
Also, we sometimes need to decrease the number of sources/sinks.

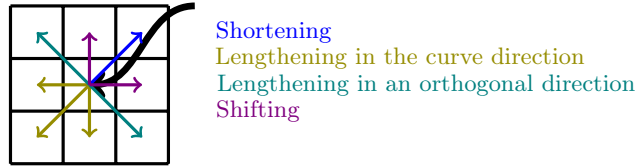
\begin{figure}[htb]
\centering
\begin{tikzpicture}[x=0.7cm, y=0.7cm, step=0.7cm]

\draw[very thick] (0,0) grid (3,3);
\draw[line width=.9mm, ->] (3.5,3) to[out=180,in=0] (1.5,1.5);

\draw[blue, very thick, ->] (1.5,1.5)--(2.5,2.5);
\draw[olive, very thick, ->] (1.5,1.5)--(.5,.5);
\draw[olive, very thick, ->] (1.5,1.5)--(1.5,.5);
\draw[olive, very thick, ->] (1.5,1.5)--(.5,1.5);
\draw[teal, very thick, ->] (1.5,1.5)--(.5,2.5);
\draw[teal, very thick, ->] (1.5,1.5)--(2.5,.5);
\draw[violet, very thick, ->] (1.5,1.5)--(1.5,2.5);
\draw[violet, very thick, ->] (1.5,1.5)--(2.5,1.5);

\node[blue] at (5,2.5) {Shortening};
\node[olive] at (7.4,2) {Lengthening in the curve direction};
\node[teal] at (7.9,1.5) {Lengthening in an orthogonal direction};
\node[violet] at (4.7,1) {Shifting};

\end{tikzpicture}
\caption{For the discretization of the continuous curve drawn in black and the Dirac mass at its endpoint, we represent the four different cases of the algorithm.}
\label{fig:directions}
\end{figure}\vspace{-4mm}
We now describe the discrete implementation in two dimensions.
In what follows,
for $x\in\mathcal{N}$ we denote by $\delta^x\in\R^\mathcal{N}$ a discrete Dirac mass at $x$, given by $\delta^x_{i,j}=1$ if $x=(i,j)$ and
$\delta^x_{i,j}=0$ else.
The algorithm is the following:
\begin{enumerate}
    \item Choose a set of initial points $\mathcal{S}\subset \mathcal{N}$ and a set
    of final points $\mathcal{T}\subset \mathcal{N}$
    of same cardinality $n_0$, and initialize $\mu=\sum_{s\in \mathcal{S}}\delta^{s} - \sum_{t\in \mathcal{T}} \delta^{t}\in \R^\mathcal{N}$ as a sum of discrete Dirac masses with total sum zero;
    \item Solve \eqref{eq:ProblemDiscrete} (e.g. with the primal-dual algorithm~\cite{chambolle_pock}, see Appendix~\ref{sec:PD});
    \item For each Dirac mass at some $x=(i,j)\in \mathcal{N}$: estimate the discrete curve orientation near $x$, by the average of the discrete vector field $z$ in the $3 \times 3$ square around $x$. 
    We distinguish four cases:
            \begin{enumerate}
                \item The \textbf{shortening} case: if $g > g_{max}$ at the current pixel of the Dirac mass, move the Dirac mass to the pixel in the $3 \times 3$ square in the curve shortening direction;
                \item The \textbf{lengthening in the curve direction} case: if the condition of the previous case is not satisfied, and $g \leq g_{max}$ on one of the pixels opposite or at 45° of the opposite of the curve shortening direction, and this pixel has not yet been visited by this Dirac mass, move the Dirac mass to the pixel verifying this condition with the lowest potential $g$;
                \item The \textbf{lengthening in an orthogonal direction} case: if the conditions of the two previous cases are not satisfied, and $g \leq g_{max}$ on one of the pixels in one of the two orthogonal directions to the  shortening direction, and this pixel has not yet been visited by this Dirac mass, move the Dirac mass to the pixel with this condition and the lowest potential $g$;
                \item The \textbf{shifting} case: if the conditions of the previous three cases are not satisfied, and $g \leq g_{max}$ on one of the pixels in one of the two directions at 45° of the curve shortening direction, and this pixel has not yet been visited by this Dirac mass, move the Dirac mass to the pixel verifying this condition with the lowest potential $g$;
            \end{enumerate}
    \item Reiterate steps 2 and 3 until all of the Dirac masses have converged.
\end{enumerate}
The directions in the $3 \times 3$ square around a Dirac mass are represented in Fig.~\ref{fig:directions}. The Dirac masses converge in practice, as they cannot go from the ``lengthening and shifting'' stage to the ``shortening'' stage, and in the ``lengthening and shifting'' stage, each pixel can only be visited once. In some instances (see next Section), we
may have to merge some endpoints/curves and start
again the algorithm to improve the results, ending in
a number of endpoints which is always less than the
initial choice.
It remains unclear how to extend properly
these rules in 3D and to cope with the fact
that 3D curves are more likely to avoid the
zones of interest, which makes trickier the
initialization to reproduce our 2D results.
This is a topic for future study and
experiments.

\begin{figure}[htb]
    \centering
    \includegraphics[width=.28\textwidth]{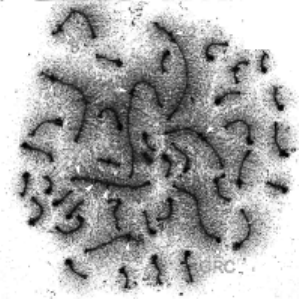}
    \includegraphics[width=.28\textwidth]{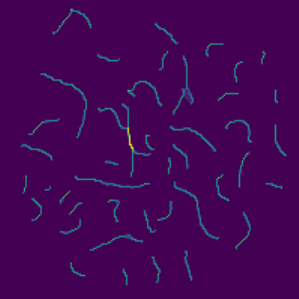}
    \caption{Result of the iterative algorithm after post-processing, on a $200 \times 200$ image of 42 bird chromosomes. 
     {Obtained after 100 iterations of the iterative algorithm, each consisting in 60 primal-dual steps, for a total of 1~min.~05~s.\ wall time on an average laptop. A post-processing, which consists in 5000 primal-dual steps on the last set of Dirac masses, adds another 1~min.~12~s.~wall time.}}
    \label{fig:resultat_chromosomes}
\end{figure}

\begin{figure}[htb]
    \centering
        \begin{tikzpicture}
    \node(a) at (0,0) {\includegraphics[width=0.235\textwidth]{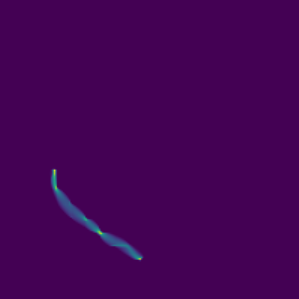}};
    \node[white] at (a.south west)
    [anchor=center,
    xshift=7mm,
    yshift=3mm
    ]
    {$k=10$};
    \node(b) at (0.25\textwidth,0) {\includegraphics[width=0.235\textwidth]{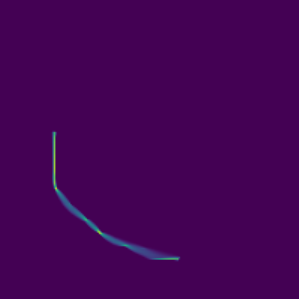}};
    \node[white] at (b.south west)
    [anchor=center,
    xshift=7mm,
    yshift=3mm
    ]
    {$k=60$};
    \node(c) at (0.50\textwidth,0) {\includegraphics[width=0.235\textwidth]{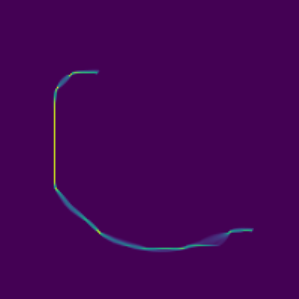}};
    \node[white] at (c.south west)
    [anchor=center,
    xshift=7mm,
    yshift=3mm
    ]
    {$k=190$};
    \node(d) at (0.75\textwidth,0) {\includegraphics[width=0.235\textwidth]{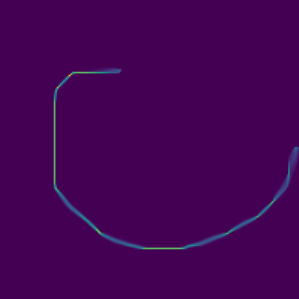}};
    \node[white] at (d.south west)
    [anchor=center,
    xshift=7mm,
    yshift=3mm
    ]
    {$k=380$};
    \node at (a.north east)
    [
    anchor=center,
    xshift=-8.6mm,
    yshift=-8.6mm
    ]
    {
        \includegraphics[width=0.12\textwidth]{virgule.pdf}
    };
    \end{tikzpicture}
            \begin{tikzpicture}
    \node(a) at (0,0) {\includegraphics[width=0.235\textwidth]{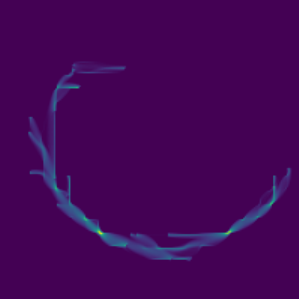}};
    \node[white] at (a.south west)
    [anchor=center,
    xshift=7mm,
    yshift=3mm
    ]
    {$k=30$};
    \node(b) at (0.25\textwidth,0) {\includegraphics[width=0.235\textwidth]{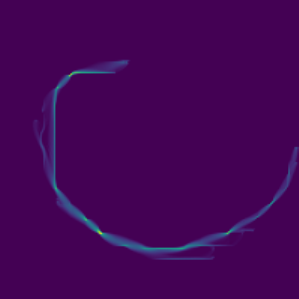}};
    \node[white] at (b.south west)
    [anchor=center,
    xshift=7mm,
    yshift=3mm
    ]
    {$k=160$};
    \node(c) at (0.50\textwidth,0) {\includegraphics[width=0.235\textwidth]{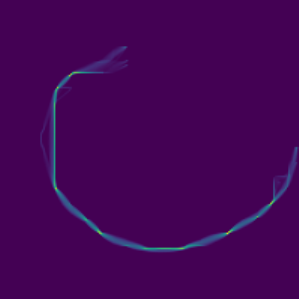}};
    \node[white] at (c.south west)
    [anchor=center,
    xshift=7mm,
    yshift=3mm
    ]
    {$k=390$};
    \node(d) at (0.75\textwidth,0) {\includegraphics[width=0.235\textwidth]{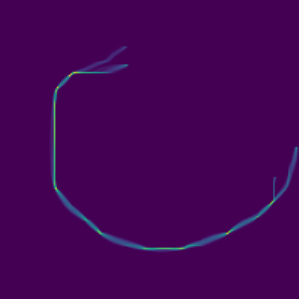}};
    \node[white] at (d.south west)
    [anchor=center,
    xshift=13mm,
    yshift=3mm
    ]
    {post-processing};
    \node at (a.north east)
    [
    anchor=center,
    xshift=-8.6mm,
    yshift=-8.6mm
    ]
    {
        \includegraphics[width=0.12\textwidth]{virgule.pdf}
    };
    \end{tikzpicture}
            \begin{tikzpicture}
    \node(a) at (0,0) {\includegraphics[width=0.235\textwidth]{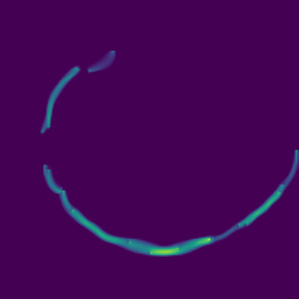}};
    \node[white] at (a.south west)
    [anchor=center,
    xshift=7mm,
    yshift=3mm
    ]
    {$k=30$};
    \node(b) at (0.25\textwidth,0) {\includegraphics[width=0.235\textwidth]{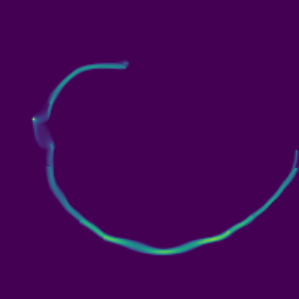}};
    \node[white] at (b.south west)
    [anchor=center,
    xshift=7mm,
    yshift=3mm
    ]
    {$k=110$};
    \node(c) at (0.50\textwidth,0) {\includegraphics[width=0.235\textwidth]{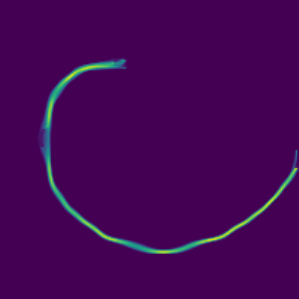}};
    \node[white] at (c.south west)
    [anchor=center,
    xshift=7mm,
    yshift=3mm
    ]
    {$k=280$};
    \node(d) at (0.75\textwidth,0) {\includegraphics[width=0.235\textwidth]{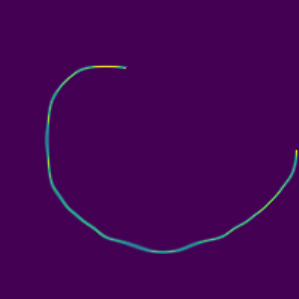}};
    \node[white] at (d.south west)
    [anchor=center,
    xshift=13mm,
    yshift=3mm
    ]
    {post-processing};
    \node at (a.north east)
    [
    anchor=center,
    xshift=-8.6mm,
    yshift=-8.6mm
    ]
    {
        \includegraphics[width=0.12\textwidth]{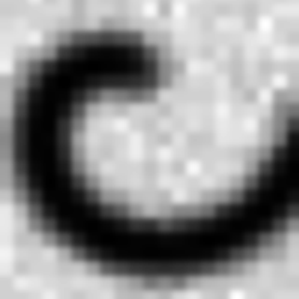}
    };
    \end{tikzpicture}
        \caption{Result of the iterative algorithm on the $200 \times 200$ ``comma'' shape. The iteration number $k$ for the algorithm is indicated.
        Top: $\mu$ initialized as a pair of Dirac masses of opposite intensities randomly chosen in the sublevel set $\{ g \leq g_{max} \}$.
       Middle: $\mu$ initialized as $15$ pairs of Dirac masses of opposite intensities randomly chosen in the sublevel set $\{ g \leq g_{max} \}$. 
        Bottom: same as ``middle'', but the ``comma'' shape is blurred with a Gaussian filter of standard deviation 2. 
        {Each iteration had 80 primal-dual steps.
        The middle experiment was the longest, taking 5~min.~07~s.~wall time after 390 iterations on an average laptop. A post-processing of 5000 primal-dual steps on the last set of Dirac masses added 1~min.~03~s.~wall time.}
        }
        \label{fig:grille_resultats}
\end{figure}
\subsection{Numerical results}\label{sec:Num}
We show a few results obtained with
our implementation\footnote{available
at \url{https://github.com/majidarth/Discrete-curve-reconstruction}.}.
The computation of the optimal $z$ given
$\mu$ relies on a primal-dual method~\cite{chambolle_pock}, see Appendix~\ref{sec:PD} for details.
In practice, getting a good estimate for
$z$ which solves~\eqref{eq:ProblemDiscrete} at each iteration requires few steps, since we initialize the variables of the algorithm as the variables from the last step of the previous iterate (and the measure $\mu$ does not change much from one step to the next). 
Moreover, we use simple projection steps and an adapted choice of the step sizes.
{The implementation
is developed in Python and
does not use
parallelization (outside of the default Numpy multi-threading). We indicate in a few
captions the wall time of the execution
on an average laptop.}

The success of the iterative bi-level algorithm relies upon the quality of the initialization of $\mu$: in the case of an image with many curves to reconstruct, in order to make sure to capture all of them, one may need to initialize $\mu$ with a very big set of pairs of discrete masses of intensity $+1$ and $-1$, randomly over the whole image, such that each two Dirac masses of a pair are close (e.g.\ in each other's $4 \times 4$ neighborhood). For the result presented in Fig.~\ref{fig:resultat_chromosomes}, $\mu$ was initialized as 2000 such pairs of Dirac masses, while there are only 42 chromosomes in the original image\footnote{Initially from~\cite[Fig.~2]{chromosomes}, with a rescaling, a change of contrast and to which noise was added}.

When $\mu$ is initialized as a very large number of Dirac pairs, some additional processing is required.
First, at each iteration, whenever two Dirac masses of opposite coefficients are in each other's $3 \times 3$ neighborhood,
it is considered that they should merge and they are removed from $\mu$.
Furthermore, after the Dirac masses have converged, it is important to post-process them. In practice, many pairs of Dirac masses may converge close to the endpoints of the same acquisition of a curve in the original image: they need to be merged. In order to do this, the curves need to be deduced from the discrete vector field $z \in \mathcal{V}_d$ (\textit{i.e.}, $z$ is decomposed into a superposition of open curves whose endpoints are the Dirac masses). Then, every time two pairs of opposite Dirac masses correspond to curves that superpose and are roughly of the same orientation, these curves are essentially merged, keeping only the two Dirac masses (of opposite signs) which correspond to the endpoints of the global curve, and discarding the other two. With the resulting set of Dirac masses,~\eqref{eq:ProblemDiscrete}~is then computed one last time, and the resulting $z$ is the final result. In addition to Fig.~\ref{fig:resultat_chromosomes}, we show
three results on the ``comma'' image on Fig.~\ref{fig:grille_resultats}: one with a random initialization
with two endpoints, and two with a random initialization with 15 pairs
of Dirac masses in the region of low potential $g$. The last one,
obtained after slightly blurring the potential, allows to recover nicely
and completely the main curve in the image.

\section{Curvature penalization}
\label{sec:curvature}
\subsection{Lifting of curvature-dependent energies}\label{sec:lifting}

We now detail how the method is extended
to incorporate curvature penalization,
using the celebrated representation in the group
of ``roto-translations'' 
introduced and developed in~\cite{SartiCitti,Boscainetal14,Prandietal,Duitsetal14,BekkersDuitsMS2015,bekkers_multi-orientation_2014,duits_optimal_2018,laville:hal-05124672} (and many other works by the same groups).
Yet as before, instead of relying on oriented
edge detection~\cite{bekkers_multi-orientation_2014} or solving an anisotropic eikonal equation to minimize the length
of curves~\cite{duits_optimal_2018}, we build a convex energy of
vector fields with divergence constraints.
We follow the framework in~\cite{ChambollePockRoto},
which addresses the implementation of
a ``total roto-translational variation'',
that is an energy which enforces
a curvature penalization of the gradient of a function. Up to a $90^\circ$ rotation,
in 2D, this is equivalent to penalizing
the mass of a zero-divergence field,
and we can re-use much of the numerical approach
of~\cite{ChambollePockRoto} for our implementation.
Adaption to higher dimension would be possible
(and not very difficult), yet computationally hardly
tractable.

We want to penalize  a planar curve,
parameterized by a Lipschitz
map $\gamma:[0,1]\to R$ where $R$ is
the image domain (a closed, arcwise connected
set, in general a rectangle), by a
curvature dependent energy of the form:
\begin{equation}\label{eq:curvatureenergy}
\int_0^1 g(\gamma(t))f(\kappa_\gamma(t))|\gamma'(t)|dt,
\end{equation}
for $f$ an even, convex function,
with $f\ge 1$, and $g$
as before a (positive, contiuous) varying weight.
Here, $\kappa_\gamma(t)$ is the (absolute) curvature,
that is, the length of the component of ${\gamma''}/|\gamma'|$ orthogonal to $\gamma'$
(it is given by $|\gamma''|$ if $\gamma$ is parameterized so that $|\gamma'|=1$ a.e.~in $[0,1]$).

In practice, we lift the image domain
$R$ by adding
a variable corresponding to the orientation
of the curves, setting $Q=R\times \Sp^1$ where  $\Sp^1:=\R/(2\pi \Z)$ is the periodic circle
of length $2\pi$. By convention, we
will denote $(x,\theta)$ a point in $R\times \Sp^1$
and $\vec{\theta}= (\cos\theta,\sin\theta)$ the
point in the circle defined by the angle $\theta$.

In~\cite{ChambollePockRoto}, it is shown that
 --- following the framework in~\cite{SartiCitti,Boscainetal14,Prandietal,Duitsetal14,BekkersDuitsMS2015,bekkers_multi-orientation_2014,duits_optimal_2018} --- one can
 represent~\eqref{eq:curvatureenergy}
 by lifting the curve $\gamma: [0,1]\to R$
as a curve $\Gamma(t):=(\gamma(t),\theta(t)): [0,1]\to Q$
where $\theta$ is the angle of $\gamma(t)'$ with the direction $(1,0)$ (so that $\Vec{\theta} =\gamma'/|\gamma'|$ {and the  curvature is $\kappa_\gamma =\theta'/\abs{\gamma'}$}), and weighting
$\Gamma$ with an appropriate ``length'',
as we describe now.
We introduce $f^\infty(t) = \lim_{a\to +\infty}f(at)/a$, the recession function of $f$ at infinity. Then, we define $\bar h$ as the perspective function of $f$: (\textit{cf}~\cite[eq.~(3)]{ChambollePockRoto}):
\begin{equation}
    \bar h(s,t) = \begin{cases}
        s f(t/s) & \text{ if } s>0,\\
        f^\infty(t) & \text{ it } s=0,\\
        +\infty  & \text{ else.}
    \end{cases}
\end{equation}
Equivalently, $\bar h$ is the support function
of $\{(a,b)\in\R^2: f^*(a)+b\le 0\}$, that is:
\begin{equation}\label{eq:dualbarh}
\bar h(s,t) = \sup\left\{ as + bt: f^*(a)+b\le 0\right\}.
\end{equation}
Then, for $\theta$ an angle (with $\vec{\theta}$ the
associated unit vector), and $v=(v^x,v^\theta)\in\R^2\times \R$, we
let:
\[
h(\theta,v) =\begin{cases} \bar h(\lambda,v^\theta) 
& \text{ if } v^x = \lambda\vec{\theta} \text{ for some }\lambda\in\R\,,\\
+\infty & \text{else.}
\end{cases}
\]
In particular, 
\begin{equation}\label{eq:dualdefh}
    h(\theta,v) = \sup\left\{ \vec{a}\cdot v^x
    + b v^\theta : (\vec{a},b)\in\R^2\times \R, f^*(\vec{a}\cdot\vec{\theta})+b\le 0\right\}.
\end{equation}      

One can check that with such a definition,
\begin{equation}\label{eq:rotocurveenergy}
\int_0^1 g(\Gamma^x(t))h(\Gamma^\theta(t),{\Gamma}'(t)) dt
= \int_0^1 g(\gamma(t)) f(\kappa_{\gamma}(t))
|\gamma'(t)|dt.
\end{equation}
The curvature naturally pops out because the vertical
component ${\Gamma^\theta}'$ of $\Gamma'$ is precisely
the derivative of the direction of the curve. We refer to~\cite{ChambollePockRoto} for the computational details.

Now, to adapt our setting to this framework, we consider normal charges $z=(z^x,z^\theta)$ supported in $Q$, and their horizontal projection (or marginalization) $\tilde z$ onto $R$ defined
by  $\tilde z(B):= z^x(B\times \Sp^1)$ for any borel set
$B\subset R$. Observe that if
\[
-\Div z = \delta_b - \delta_a
\]
(3D divergence)
in $Q$, then, denoting $a=(a^x,a^\theta)\in R\times \Sp$, etc, we find that:
\[
-\Div \tilde z = \delta_{b^x}-\delta_{a^x},
\]
in particular a curve in $Q$ between
the endpoints $a,b$ is projected onto a curve
in $R$ with endpoints $a^x,b^x$.
As before, given $\mu^+$, $\mu^-$ two measures
in $Q$ with $\mu^+(Q)=\mu^-(Q)$ (in practice, 
we use sums of Dirac masses), we can solve
the problem:
\begin{equation}\label{eq:RTenergy}
\min \left\{\int_Q g(x)h(\theta,z) : -\Div z = \mu^+-\mu^-\right\}.
\end{equation}
Thanks to Theorem~\ref{prop:smirnov} (provided $g$ and $R$ are such that (A0--2) hold), there
exists a measure $\sigma$ which decomposes $z$
into curves $\Gamma$ of finite length,
with properties~(\ref{eq:smirnovdecompoA}--\ref{eq:smirnovdecompoD}). In particular, $\sigma$-a.e.~curve
$\Gamma$ is minimal for the energy~\eqref{eq:rotocurveenergy},
so that its horizontal projection is minimal for~\eqref{eq:curvatureenergy}.
If $\mu^\pm$ are given by $n$ Dirac masses, $n\ge 1$,
we find that in general (up to a possible
non-uniqueness of the geodesic curves), the
horizontal projection $\tilde z$ consists therefore in
$n$ (possibly overlapping) curves minimizing the curvature dependent
energy~\eqref{eq:curvatureenergy} among all curves
joining the points in the support of $\mu^-$
to the points in the support of $\mu^+$.

\subsection{Numerical implementation}\label{sec:curvature_implementation}

In this new context, the discretization of~\eqref{eq:RTenergy} has the form
\[
\min_{z\in\mathcal{V}_d^{RT}, D^*z = \mu}
\sum_{i,j,k} g_{i,j} h(\theta_k,(Az)_{i,j,k})
\]
where $\mathcal{V}_d^{RT}\simeq (\R)^{(N-1)\times M\times K}
\times (\R)^{N\times (M-1)\times K}
\times (\R)^{N\times N\times K}$
is periodic in the last component and
represents the space of charges
in $Q=R\times \Sp^1$, discretized on a $N\times M \times K$ grid, $N,M,K\ge 1$, and $\theta_k = 2k\pi/K$. The operator $D$ is still a discrete
differentiation operator (with adjoint $D^*z$),
with the third component given by:
\[
D u_{i,j,k+\frac{1}{2}} = \begin{cases} u_{i,j,k+1}-u_{i,j,k}
& k = 1\dots,K-1 \\
 u_{i,j,1}-u_{i,j,K}
& k = 0.
\end{cases}
\]

The implementation relies again on 
optimizing the 3D version of~\eqref{eq:saddlePD}:
\begin{equation}\label{eq:saddleRT}
\min_{z: D^* z = \mu} \max_{p \in C_{g,h}} \sca{p}{Az}
\end{equation}
and in order to implement our algorithm, one now needs to be able to project on the constraint
set $C_{g,h}$ whose support function is
the energy. Thanks to~\eqref{eq:dualdefh}, this is
given by:
\begin{multline*}
C_{g,h} = \Big\{p_{i,j,k}=g_{i,j}(\vec{q}^x_{i,j,k},q^\theta_{i,j,k})\in (\R^3)^{N\times M\times K}:\\
f^*(\vec{q}^x_{i,j,k}\cdot \vec{\theta}_k) + q^\theta_{i,j,k} \le 0
\, \forall i,j,k\Big\}.
\end{multline*} 
Here as before, we have defined
the horizontal vector
$\vec{\theta}_k = (\cos \theta_k,\sin\theta_k)$.
Since each component $(i,j,k)$ is independent, the
projection onto $C_{g,h}$ is built upon the projection
onto the set whose $\bar h$ is the support function, \textit{cf}~\eqref{eq:dualbarh}:
\[
C_{\bar h} : = \left\{ (a,b)\in\R^2: f^*(a)+ b\le 0\right\}.
\]
Then, to obtain the projection
$q=\Pi_{C_{g,h}}(p)$ of a dual variable $p$
onto $C_{g,h}$, we compute for each coordinate $(i,j,k)$:
\begin{equation}\label{eq:anisoproj}
\begin{aligned}
(a,b) &= g_{i,j}\Pi_{C_{\bar h}}
\Big(\frac{1}{g_{i,j}}(\vec{p}_{i,j,k}^x\cdot \vec{\theta}_k,p_{i,j,k}^\theta)\Big)\\
\vec{q}^x_{i,j,k} &= \vec{p}^x_{i,j,k} + 
\left(a -\vec{p}_{i,j,k}^x\cdot \vec{\theta}_k\right)\vec{\theta}_k \\
q^\theta_{i,j,k} &= b.
\end{aligned}
\end{equation}

As in~\cite[Sec.~4.2]{ChambollePockRoto}, we considered
the following choices for $f$ (where each time
$\alpha$ is a positive parameter): \begin{itemize}
    \item $f_1(t) = 1+\alpha |t|$ (``TAC'', total absolute
    curvature);
    \item $f_2(t) = \sqrt{1+\alpha^2 t^2}$ (``TRL'', total
    roto-translational length---TRV in~\cite{ChambollePockRoto});
    \item $f_3(t) = 1+\alpha^2|t|^2$ (``EL'', Elastica energy---TSC in~\cite{ChambollePockRoto}).
\end{itemize}
The first and second allow for brutal change of
directions in the curves (which are penalized by the
turning angle), while the third enforces smoothness of
the curves.
The corresponding sets $C_{\bar h}$ are given by:
\begin{equation*}
    \begin{aligned}
    C_{\bar h_1} & = \left\{ (a,b) \in\R^2\,:\, a\le 1\,, |b|\le\alpha\right \}, \\
    C_{\bar h_2} & = \left\{ (a,b) \in\R^2\,:\,
    \max(0,a)^2 + (b/\alpha)^2 \le 1\right \}, \\
    C_{\bar h_3} & = \left\{ (a,b) \in\R^2\,:\, a+ b^2/(2\alpha)^2 \le 1 \right \}.
    \end{aligned}
\end{equation*}
The projections are implemented as in~\cite{ChambollePockRoto},
actually we re-used the \texttt{c++} programs developed by
Thomas~Pock for this purpose. While the projection onto
$C_{\bar h_1}$ is straightforward, the projections onto
$C_{\bar h_2}$ (for $\alpha \neq 1$) and $C_{\bar h_3}$
require to solve a non-linear problem and rely on a few
iterations of a Newton method. We refer to~\cite[Sec.~4.2]{ChambollePockRoto} for details.
\begin{figure}[htb]
    \centering
    \includegraphics[width=0.225\textwidth]{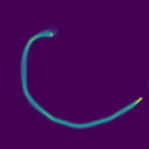}
    \includegraphics[width=0.3\textwidth]{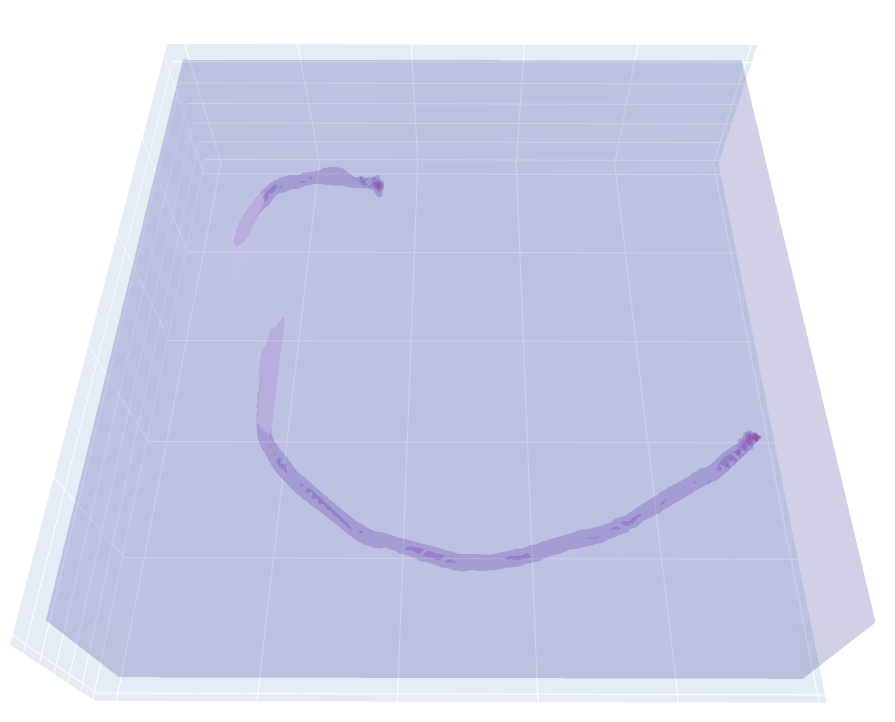}
    \includegraphics[width=0.3\textwidth]{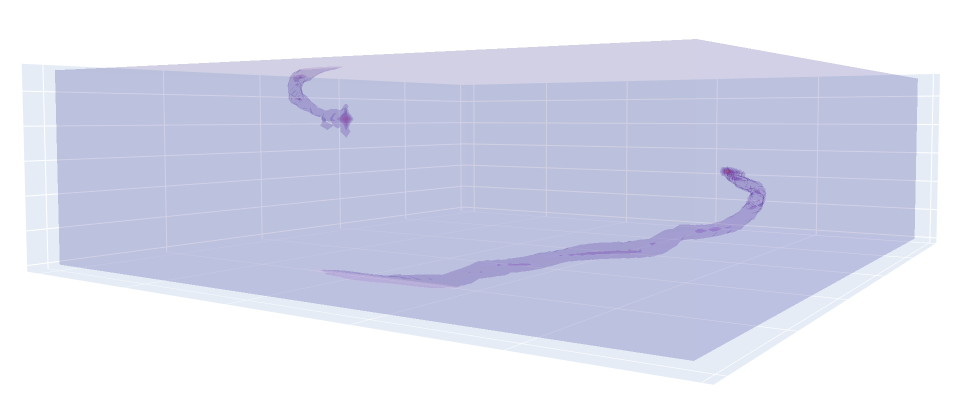}
    \caption{Minimization of the EL energy with $\alpha = 1$, on a $100 \times 100$ version of the ``comma'' image (note that the image is half the size here compared to previous examples). Left: projection on the image domain, center and right: result in the lifted image domain. This was obtained after 1500 primal-dual steps, with fixed endpoints, for a total of 4 min.\ 50s.\ wall time on an average laptop.}
    \label{fig:resultat_virgule_el}
\end{figure}

\subsection{Rules for moving the endpoints}\label{sec:Iterative_curvature}

We also adapted the method of Section~\ref{sec:Iterative} to
find the endpoints of a family of curves. In that case,
the endpoints are pairs $(x,\theta)$ with $x=(i,j)$
and $\theta = 2\pi k/K$ for some integers $(i,j,k)$.
The update of $(i,j)$ follows the same rules as
in Section~\ref{sec:EndPoints}. The update of $k$
is simpler:  at the free endpoints, we would like
the curvature to vanish 
and the curves to satisfy ${\Gamma^\theta}'(0)={\Gamma^\theta}'(1)=0$. Hence
if $z_{i,j,k\pm \frac{1}{2}}$ is positive (up
to some threshold), we move
up or down (depending whether the point is in $\mu^+$---leaving
or $\mu^-$---entering, which is given by the sign of
the Dirac mass) the Dirac mass at $(i,j,k)$, while
if negative we move it in the other direction.

\begin{figure}[htb]
    \centering
    \includegraphics[width=0.25\textwidth]{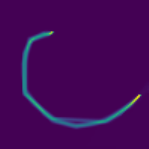}
    \includegraphics[width=0.25\textwidth]{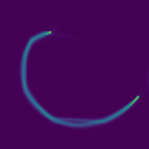}
    \caption{Left:  TAC, right: EL, on the $100 \times 100$ ``comma'' image, with parameter $\alpha=5$, 1500 primal-dual steps with fixed endpoints. Observe that the TAC allows for
    jumps in the direction, which are prevented by
    the EL.}
    \label{fig:resultat_virgule_tac_el}
\end{figure}
\subsection{Numerical experiments}
We show a few numerical results for curvature-dependent energies. These results require more processing time than curvature-independent implementations, since the underlying computations are in the 3D lifted variables. In every example in this section, the angle $\theta$ is discretized into 30 possible values (with the notation of Section~\ref{sec:curvature_implementation}, $K=30$). We consider the energies described earlier,
namely the ``TAC'', the ``TRL'' and the ``EL''. We also show some results in the 3D lifted domain, where the length and width of the volume represent the domain of the image, and the height represents the angle $\theta$, as detailed in Section~\ref{sec:lifting}.
We start by looking at a simple example,
recovering the previous  ``comma'' shape with the EL
energy, see Figure~\ref{fig:resultat_virgule_el}. 
This energy yields a very smooth curve. Then
in Figure~\ref{fig:resultat_virgule_tac_el} we compare
the TAC and the EL, with a stronger curve penalization ($\alpha = 5$): the curve obtained with EL is clearly smoother than the one obtained with TAC, which allows for jumps in the direction.

We then compare in Figure~\ref{fig:resultats_croix}
the three different curvature-dependent energies,
on a simple image with two crossing curves
(with $\alpha=1$ and fixed endpoints).
The smaller image on the bottom right 
is the result obtained with the curvature-independent energy~\eqref{isotropic_2d}
(after 1800 primal-dual steps).
Observe that in the latter case, the crossing is
obviously not recovered, and the method finds
a combination of many geodesic curves with almost
same length which lie in the wide
the low-potential area. On the other hand, the curvature
penalized results are much more stable.
\begin{figure}[htb]
    \begin{center}
    \includegraphics[width=0.35\textwidth]{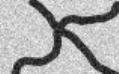}
    \includegraphics[width=0.35\textwidth]{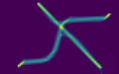}
    \makebox[.2\textwidth]{} 
    \\[.5mm]
    \includegraphics[width=0.35\textwidth]{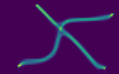}
    \includegraphics[width=0.35\textwidth]{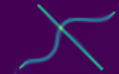}
    \includegraphics[width=0.2\textwidth]{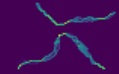}
    \end{center}
    \caption{Top left: $80 \times 50$ pixel input image. Top right: TAC. Bottom left: TRL, center: EL. Small image: without curvature
    penalization. We used $\alpha = 1$, and 1800 primal-dual steps (with fixed endpoints).
    Wall time of roughly 4 min.\ 30 s.\ for every experiment, on an average laptop.
    The crossing is better reconstructed with the EL, which
    totally prevents part of the flow to make an abrupt
    turn towards a wrong direction.}
    \label{fig:resultats_croix}
\end{figure}
The resulting curves are smoother for the TRL energy than for the TAC energy, and smoothest for the Elastica EL energy. Also, the two curves are not perfectly identified
with TRL and TAC, since curves turning abruptly still
have bounded energy---while they have infinite EL energy. 
This is better seen in the lifted image domain, see Figure~\ref{fig:resultats_croix_2}.

\begin{figure}[htb]
    \centering
    \includegraphics[width=0.3\textwidth]{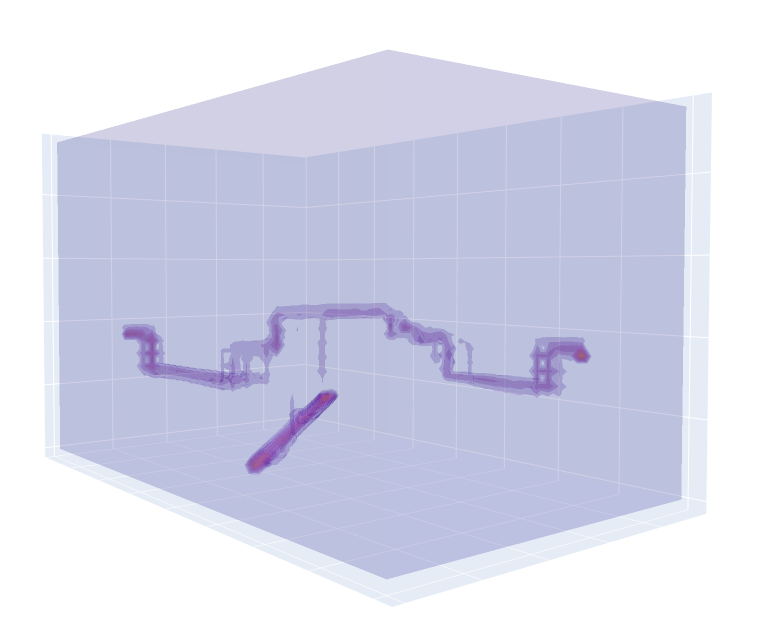}
    \includegraphics[width=0.3\textwidth]{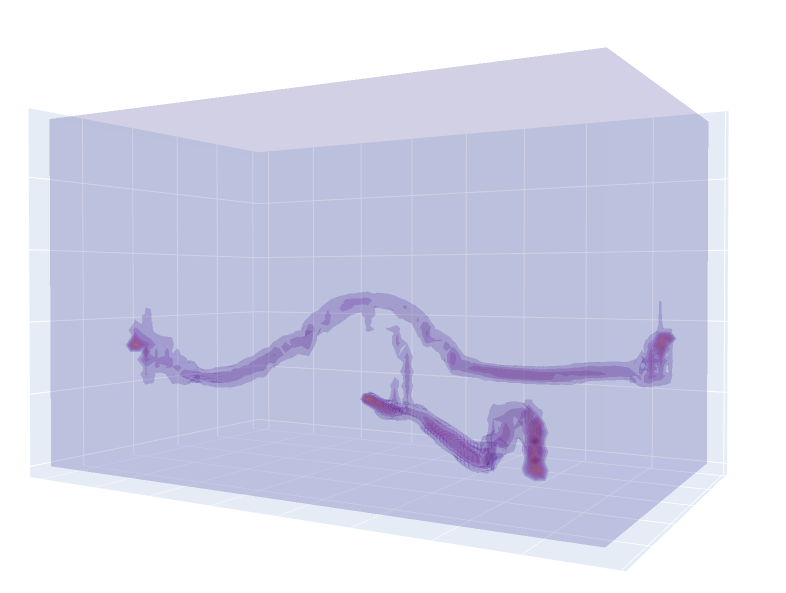}
    \includegraphics[width=0.3\textwidth]{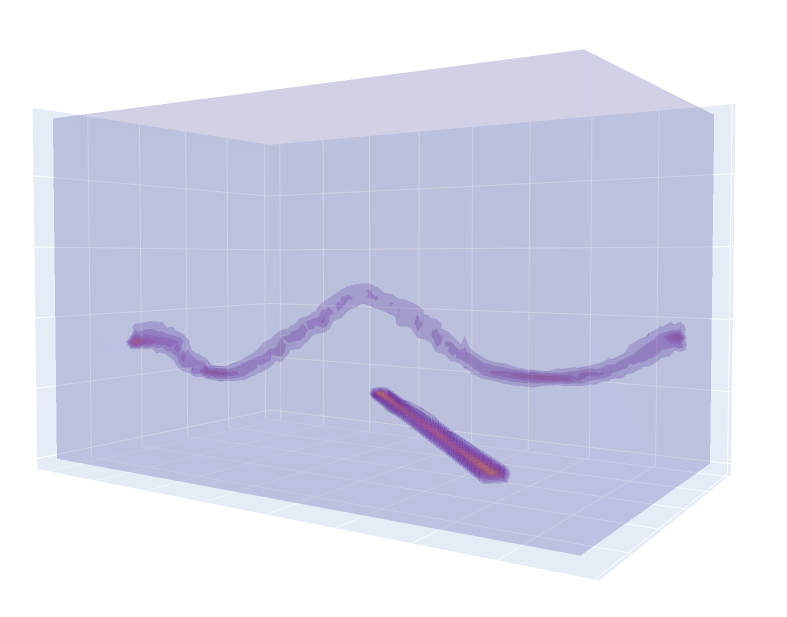}
    \caption{From left to right:
    results of Figure~\ref{fig:resultats_croix} represented in the lifted image domain for the TAC, the TRL and the EL.}
    \label{fig:resultats_croix_2}
\end{figure}


We then present a result where the endpoints are not fixed, using the rules of Section~\ref{sec:Iterative} extended to the case of curvature-dependent energies (\textit{cf.}~Sec.~\ref{sec:Iterative_curvature}), to evolve the endpoints; see
Figure~\ref{fig:resultat_faux_chromosomes}. 
Now, the curves can cross each other (unlike in Figure~\ref{fig:resultat_chromosomes}) so that
curvature-dependent energies are more appropriate.
In this experiment the iterative algorithm recovers all of the curves in the image.
\begin{figure}[htb]
    \centering
    \includegraphics[width=0.3\textwidth]{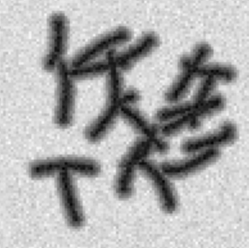}
    \includegraphics[width=0.3\textwidth]{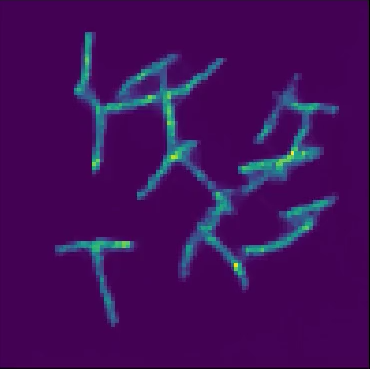}
    \caption{Final result of the iterative algorithm, on a $100 \times 100$ image, using the TRL energy with parameter $\alpha = 1$, after 10 iterations of the algorithm of Section~\ref{sec:Iterative}, each iteration corresponding to 500 primal-dual steps, for a total of about 5 min.\ wall time on an average laptop. The Dirac masses were initialized as 1000 pairs of opposite intensities randomly chosen in the sublevel set $\{g \leq g_{max}\}$.}
    \label{fig:resultat_faux_chromosomes}
\end{figure}

\section{Conclusion}
In this paper, we have introduced a variational method for curve extraction which differs from the usual Eikonal equation approach~\cite{Peyreetal}, as we represent paths as measure vector fields.
It allows to simultaneously compute geodesics between a large numbers of endpoints.
We also proposed a bi-level method to automatically fit the endpoints, yielding a complete, mostly unsupervised automatic curve extraction algorithm. It was extended
to consider curvature penalization, allowing to
reconstruct smooth or crossing curves.
Further work could focus on more refined averaging
operators to improve the sharpness and isotropy
of the results, building upon recent findings in the context of discrete total variations. 

{\section*{Acknowledgements} A.~Chambolle acknowledges the support of the ``France 2030'' funding ANR-23-PEIA-0004 (``PDE-AI'').
Part of this work was done while M.~Arthaud was an intern in the Mokaplan team, 
with the support of INRIA Paris. We thank Thomas Pock for
allowing to re-use the code developed for~\cite{ChambollePockRoto}.}

%
%
%
\bibliographystyle{splncs04}
\bibliography{biblio.bib}
\appendix

\section{Proof of Theorem~\ref{prop:smirnov}}
\label{apx:decompo}
We first provide the proof of Theorem~\ref{prop:smirnov}, before stating some Lemmas on which it relies.

\begin{proof}[of Theorem~\ref{prop:smirnov}]
We begin by proving the existence of a solution $z$.
By Lemma~\ref{lem:feasible} below, Problem~\eqref{eq:ProblemContinuous2} is feasible, hence we may consider a minimizing sequence $(z_{n})_{n\in \N}$. There exists a constant $C>0$ such that for all $n\in \N$,
\begin{align*}
	C \geq \int g(x,z_n) \geq c \abs{z_n}(\R^d),
\end{align*}
where we have used Assumption (A2) in the second inequality.
Thus $(z_{n})_{n\in \N}$ is bounded in $\mathcal{M}(\R^d;\R^d)$ and we may extract a subsequence $(z_{n})_{n\in I}$ (with $I \subseteq  \N$ infinite) which converges in the weak-* topology to some  $z \in \mathcal{M}(\R^d;\R^d)$. The constraint $-\Div \tilde{z} = \mu$ is weak-* closed, hence $-\Div z=\mu$. Moreover the lower-semi-continuity of $G\colon \tilde{z}\mapsto \int g(x,\tilde{z})$ for the weak-* convergence (see Section~\ref{sec:spacecharges}) implies that
\begin{align*}
	G(z)\leq \liminf_{\substack{n \to +\infty,\\ n\in I}} G(z_n) = \inf\eqref{eq:ProblemContinuous2}.
\end{align*}
As a result, $z$ is a solution to~\eqref{eq:ProblemContinuous2}.

Now, we prove the existence of a complete decomposition of for $z$. Smirnov's theorem~ (Theorem~\ref{thm:smirnov}) implies that  $z$ completely decomposes into two parts, $p$ and $q$, with $\Div p=0$ and $q$ is decomposed on curves of finite length in the support of $q$.
By Lemma~\ref{lem:pplusq} below, this implies that
\begin{align*}
 \int g(x,z) = \int g(x,p) + \int g(x,q)\geq \int g(x,q),
\end{align*}
and the minimality of $z$ for~\eqref{eq:ProblemContinuous2} implies  that
$p=0$ and $z=q$. Hence, the same theorem ensures that $z$ can be completely decomposed as in~(\ref{eq:smirnovdecompoA},~\ref{eq:smirnovdecompoB},~\ref{eq:smirnovdecompoC}) for some Borel measure $\sigma$. Moreover,  $\sigma$-a.e. curve lies in $\spt z \subseteq Q$.

Our next step is to prove~\eqref{eq:smirnovdecompoD}.
This follows from~\cite[Lemma 5.2]{ChambollePockRoto} (for the case where $g$ is continuous, which may be extended to l.s.c. $g$ by a monotone convergence argument in \cite[Cor. 5.4]{ChambollePockRoto}). We propose below an alternative proof which relies on Lemma~\ref{lem:pplusq} and on Fatou's Lemma.
Let us write
\begin{align*}
	\Gamma= \left\{\gamma \in \Lipd :\forall t \in  \R,\  \gamma(t) \in Q \right\},
\end{align*}
so that $\sigma(\Lipd\setminus \Gamma)=0$.
For every Borel partition $(\Gamma_1,\Gamma_2)$ of $\Gamma$, the decomposition formula~\eqref{eq:smirnovdecompoB} together with Lemma~\ref{lem:pplusq} yield
\begin{align*}
	\int g(x,z) &= \int g\left(x, \int_\Gamma\zgamma\diff \sigma(\gamma) \right) \\
	&= \int\! g\left(x, \int_{\Gamma_1}\zgamma\diff \sigma(\gamma) \right) + \int g\left(x, \int_{\Gamma_2} \zgamma\diff \sigma(\gamma) \right).
\end{align*}
By induction, if $(\Gamma_i)_{i \in I}$, with $I \subseteq \N$, is a Borel partition of $\Gamma$, we obtain similarly
\begin{align}\label{eq:sumG}
	G\left(\int_\Gamma\zgamma\diff \sigma(\gamma)\right) = \sum_{i \in I} G\left(\int_{\Gamma_i}\zgamma\diff \sigma(\gamma)\right)
\end{align}
with the notation $G(z) = \int g(x,z)$. Now, let $n \in \N$. Since $\Gamma$ is a compact metric space (see Section~\ref{sec:smirnovdecompo}),
 there exists a Borel partition $(\Gamma^n_i)_{i \in I^n}$ such that $\mathrm{diam}(\Gamma^n_i)\leq 1/(n+1)$ for all $i\in I^n$. Moreover, let us define, for all $\ell \in \N$,
\begin{align}
	B_\ell &= \left\{\gamma \in \Gamma : \mathrm{spt}\gamma' \subseteq [-\ell, \ell] \right\},\label{eq:defbpartition}\\
	\mbox{and } \Gamma^n_{i,\ell}&= \Gamma^n_i \cap \left(B_\ell\setminus B_{\ell-1}\right)\nonumber
\end{align}
with $B_{-1}=\emptyset$. We obtain a Borel subdivision\footnote{There are curves in $\Lipd$ which are not in any $B_\ell$, but Smirnov's construction explicitely charges the set of curves such that $\spt \gamma' \subseteq [a,b]$ for some $a,b \in  \R$ (see \cite[Sec. 2.8.2]{smirnov}). As a result $\sigma\left(\Gamma\setminus \bigcup_\ell B_\ell\right)=0$.} of $\Gamma\cap \left(\bigcup_{\ell} B_\ell\right)$, and possibly discarding the indices such that $\sigma(\Gamma_{i,\ell}^n)=0$,  we may rewrite~\eqref{eq:sumG} as
\begin{align*}
	G\left(\int_\Gamma\zgamma\diff \sigma(\gamma)\right)= \sum_{i,\ell} \sigma(\Gamma_{i,\ell}^n)G\left(\frac{1}{\sigma(\Gamma_{i,\ell}^n)}\int_{\Gamma_{i,\ell}^n}\zgamma\diff \sigma(\gamma)\right) = \int_{\Gamma} \tilde{G}^n(\gamma)\diff \sigma(\gamma),\\
	\mbox{where}\	\tilde{G}^n(\gamma) := G\left(\frac{1}{\sigma(\Gamma_{i,\ell}^n)}\int_{\Gamma_{i,\ell}^n}\zgammat\diff \sigma(\tilde{\gamma})\right) \mbox{ for $i,\ell$ such that $\gamma \in  \Gamma_{i,\ell}^n$.}
\end{align*}
Fatou's lemma then implies that
\begin{align*}
	G\left(\int_\Gamma\zgamma\diff \sigma(\gamma)\right) = \liminf_{n\to +\infty} \int_{\Gamma} \tilde{G}^n(\gamma)\diff \sigma(\gamma)
	\geq \int_{\Gamma} \liminf_{n\to +\infty}\left(\tilde{G}^n(\gamma)\right)\diff \sigma(\gamma).
\end{align*}
On the other hand, arguing as in \cite[Sec. 2.4.2]{smirnov}, we note that, restricted to each $B_\ell$, the map $\gamma\mapsto \zgamma $ is continuous for the weak-* topology. The fact that $\mathrm{diam}(\Gamma^n_{i,\ell})\leq 1/(n+1)$ thus implies that for $\sigma$-a.e. $\gamma \in  \Gamma$, if for each $n$, $\Gamma^n_{i,\ell}$ is the cell which contains $\gamma$, 
\begin{align*}
	\frac{1}{\sigma(\Gamma_{i,\ell}^n)}\int_{\Gamma_{i,\ell}^n}\zgammat\diff \sigma(\tilde{\gamma}) \overset{\ast}{\rightharpoonup} z_\gamma, 
\end{align*}
so that $\liminf_{n\to +\infty} \tilde{G}^n(\gamma) \geq G(\zgamma)$ by the lower semi-continuity of $G$.
This yields
\begin{align*}
		G\left(\int_\Gamma\zgamma\diff \sigma(\gamma)\right)\geq \int_\Gamma G(\zgamma)\diff \sigma(\gamma),
\end{align*}
and the converse inequality follows from Jensen's inequality. As a result, \eqref{eq:smirnovdecompoD} holds.


To prove that $\sigma$-a.e. is a geodesic, we use a measurable selection result: there is a Borel map $\rho$ which maps any $(x,y)\in Q^2$ to some geodesic (in the sense of point \textit{(\ref{point1})} in the statement) $\rho_{x,y}\colon [0,1]\to Q$ with $\rho_{x,y}(0)=x$, $\rho_{x,y}(1)=y$. 
That result follows from~\cite[Prop. 2.3.5]{LOHMANN2022739} (with the minor difference that they state it for isotropic functionals $g(x,z)=g(x)$), but our Assumption (A0) makes the proof more straightforward, see Lemma~\ref{lem:selection} below.

For each $\gamma \in \bigcup_{\ell} B_\ell$, the map $\gamma \mapsto b(\gamma)$ (resp. $\gamma\mapsto e(\gamma)$) which associates each curve to its beginning point (resp. endpoint) is Borel. We define
\begin{equation*}
    \tilde{z}=\int z_{\rho_{b(\gamma),e(\gamma)}}d \sigma(\gamma),
   \quad {\mbox{that is, } \int \varphi \cdot d \tilde{z}=\int \langle z_{\rho_{b(\gamma),e(\gamma)}},\varphi\rangle d\sigma(\gamma)}
\end{equation*}
for all $\varphi\in C_c(\R^d;\mathbb{R}^d)$,
one may check that $\Div \tilde{z}=\Div z$ and
\begin{align*}
    \int g(x,\tilde{z})&\le \int \int g(\rho_{b(\gamma),e(\gamma)}(t),\rho_{b(\gamma),e(\gamma)}'(t)) dt\, d \sigma(\gamma)\\ &\le \int \int g(\gamma(t),\gamma'(t)) dt\, d \sigma(\gamma) = \int g(x,z).
\end{align*}
By optimality of $z$, each inequality must be an equality, hence $\sigma$-almost every $\gamma$ is a minimizing curve.

Eventually, introduce $S:= \int \delta_{b(\gamma)}d\sigma(\gamma)$, $R:=\int \delta_{e(\gamma)}d\sigma(\gamma)$. Both are non-negative measures, with
\begin{equation*}
    S-R= \int \Div \zgamma d \sigma(\gamma) = \Div z.
\end{equation*}
Hence, the Hahn-Jordan decomposition theorem ensures that $(\Div z)^+ \le S$ and $(\Div z)^-\le R$. Then, Equality~\eqref{eq:smirnovdecompoC} yields
\begin{equation*}
    (S+R)(Q) = \int \|\Div(\zgamma)\|_{\mathrm{TV}}d \sigma =\left\| \Div(z)\right\|_{\mathrm{TV}}  \! \! = \left((\Div z)^+ + (\Div z)^-\right)(Q)
\end{equation*}
so that $(\Div z)^+ = S$ and $(\Div z)^-= R$, and {point \textit{(\ref{point2})}} follows by considering their support.
\qed
\end{proof}

The following Lemma addresses the existence of a feasible point for Problem~\eqref{eq:ProblemContinuous2}.
\begin{lemma}\label{lem:feasible}
Under Assumptions (A0), (A1), (A2), with $\mu(Q)=0$,  Problem~\eqref{eq:ProblemContinuous2} is feasible.
\end{lemma}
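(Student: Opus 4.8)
The plan is to exhibit an explicit feasible field obtained by superposing the low-cost curves granted by Assumption~(A0). If $\mu=0$, then $z=0$ works, so assume $\mu\neq 0$ and write the Hahn--Jordan decomposition $\mu=\mu^+-\mu^-$; since $\Spt\mu\subseteq Q$ and $\mu(Q)=0$, both $\mu^\pm$ are finite nonnegative measures carried by $Q$ with the same mass $m:=\mu^+(Q)=\mu^-(Q)>0$. Introduce the coupling $\pi:=\tfrac1m\,\mu^-\otimes\mu^+\in\mathcal{M}(Q\times Q)$, whose first marginal is $\mu^-$, second marginal is $\mu^+$, and total mass is $m$. By (A0) together with a Borel selection (Lemma~\ref{lem:selection}), there is a Borel map $(x,y)\mapsto\rho_{x,y}\in\Lipd$ with $\rho_{x,y}$ a curve from $x$ to $y$, $\rho_{x,y}$ valued in $Q$, and $\int g(\rho_{x,y}(t),\rho_{x,y}'(t))\,dt\le D$. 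I then set
\[
z:=\int_{Q\times Q} z_{\rho_{x,y}}\,d\pi(x,y),\quad\text{meaning}\quad \langle z,\varphi\rangle=\int_{Q\times Q}\langle z_{\rho_{x,y}},\varphi\rangle\,d\pi(x,y)
\]
for every $\varphi\in C_c(\R^d;\R^d)$.

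It then remains to verify the three required properties. Total mass and support: by $1$-homogeneity of $g$ and (A2), $g(\rho_{x,y}(t),\rho_{x,y}'(t))\ge c\,|\rho_{x,y}'(t)|$, so each $\rho_{x,y}$ has length at most $D/c$, hence $\|z_{\rho_{x,y}}\|_{\mathrm{TV}}\le D/c$ and $\|z\|_{\mathrm{TV}}\le mD/c<\infty$, while $z$ is supported in $Q$ because each $z_{\rho_{x,y}}$ is; thus $z\in\mathcal{V}$. Divergence: $-\Div z_{\rho_{x,y}}=\delta_y-\delta_x$ (independently of the parametrization), and integrating in $\pi$ and using its marginals gives $-\Div z=\mu^+-\mu^-=\mu$. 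Finite energy: since $G\colon z\mapsto\int g(x,z)$ is convex, positively $1$-homogeneous and weak-* lower semicontinuous (Section~\ref{sec:spacecharges}), a Jensen-type inequality for functionals of measures yields $G(z)\le\int_{Q\times Q}G(z_{\rho_{x,y}})\,d\pi(x,y)$; combined with $G(z_\gamma)=\int g(x,z_\gamma)\le\int g(\gamma(t),\gamma'(t))\,dt$ (the inequality absorbing cancellations when $\gamma$ is not simple, by subadditivity of $g(x,\cdot)$), this gives $G(z)\le D\,\pi(Q\times Q)=mD<\infty$. This is exactly the type of estimate already used in the proof of Theorem~\ref{prop:smirnov}.

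The main obstacle is the measure-theoretic bookkeeping needed to make the weak-* integral defining $z$ legitimate. The Borel selection $\rho$ is supplied by Lemma~\ref{lem:selection}, whose proof relies only on (A0) and the compactness of the curve space and does not use the present statement, so there is no circularity. For the integral itself, one reparametrizes all the $\rho_{x,y}$ to unit speed with $\mathrm{spt}\,\rho_{x,y}'\subseteq[-L,L]$ for the fixed $L:=\lceil D/c\rceil$ (possible thanks to the uniform length bound $D/c$, the charge $z_{\rho_{x,y}}$ being reparametrization-invariant), so they all lie in the compact set $B_L$ of \eqref{eq:defbpartition}, on which $\gamma\mapsto z_\gamma$ is weak-* continuous; consequently $(x,y)\mapsto\langle z_{\rho_{x,y}},\varphi\rangle$ is Borel for every $\varphi\in C_c(\R^d;\R^d)$, and the bound $\|z\|_{\mathrm{TV}}\le mD/c$ shows that $\varphi\mapsto\int\langle z_{\rho_{x,y}},\varphi\rangle\,d\pi$ defines a genuine finite vector measure with support in $Q$. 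The Jensen and subadditivity facts for $G$ are routine and of the same nature as those invoked above.
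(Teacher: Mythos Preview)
Your argument is correct, but it differs from the paper's in an instructive way. The paper also pushes the product coupling $\mu^-\otimes\mu^+$ forward through curves from (A0), but it does so \emph{discretely}: it approximates $\mu^\pm$ by empirical measures $\tfrac1n\sum\delta_{x_{i,n}}$ and $\tfrac1n\sum\delta_{y_{j,n}}$, picks for each of the $n^2$ pairs a curve $\gamma_{i,j}$ from (A0) (no measurability issue, only countably many choices), forms $z^n=\tfrac1{n^2}\sum_{i,j} z_{\gamma_{i,j}}$, and then extracts a weak-$*$ limit $z$ using the uniform bound $G(z^n)\le D$ and (A2). Your route replaces ``discretize, then pass to the limit'' by ``integrate directly against $\pi=\tfrac1m\mu^-\otimes\mu^+$'', which is cleaner but forces you to invoke the Borel selection of Lemma~\ref{lem:selection} (hence Brown's theorem) and to justify the Jensen-type inequality for the integral of charges; the paper's approximation avoids both, at the price of an extra compactness step. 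Your non-circularity remark is accurate: Lemma~\ref{lem:selection} is proved from (A0)--(A2) alone and does not rely on the present lemma, and its output already lies in a fixed $B_\ell$, so your reparametrization paragraph is essentially already contained in that lemma's proof.
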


\begin{proof}
Let $\mu=\mu^+-\mu^-$ be a Hahn-Jordan decomposition of $\mu$, and so that $\mu^+(Q)=\mu^-(Q)=\abs{\mu}(Q)/2$. Since the case $\mu=0$ is straightforward, we may assume, up to a rescaling, that $\mu^+(Q)=1$. For each $n\in \N$, there exist discrete measures of the form $\mu_n^+ =  \frac{1}{n}\sum_{i=1}^n \delta_{x_{i,n}}$, $\mu_n^- = \frac{1}{n}\sum_{j=1}^n \delta_{y_{j,n}}$ where $\{x_{i,n}\}_{i=1}^n\subseteq Q$, $\{y_{j,n}\}_{j=1}^n\subseteq Q$, and such that $\mu_n^+ \overset{\ast}{\rightharpoonup} \mu^+$, $\mu_n^- \overset{\ast}{\rightharpoonup} \mu^-$ as $n\to +\infty$.

For each pair $(x_i,y_j)$ we consider a curve $\gamma_{i,j}$ joining $x_i$ and $y_j$ satisfying ~\eqref{eq:arcwiseD}, and we let  $ z^n= \left(\sum_{i,j} z_{\gamma_{i,j}}\right)/n^2$. The convexity of $G\colon z\mapsto \int g(x,z)$ implies that
\begin{align*}
    D \geq \frac{1}{n^2} \sum_{i,j}G(z_{\gamma_{i,j}}) \geq  G(z^n)\geq c \abs{z^n}(\R^d)
\end{align*}
(where the first equality follows from (A0) and the last one from (A2)). As a result we may extract a subsequence $(z_n)_{n \in I}$ (with $I\subseteq \N$ infinite) such that $z^n \overset{\ast}{\rightharpoonup}z$ for some $z\in \mathcal{M}(\R^d;\R^d)$ as $n\to +\infty$ in $I$. Then, in the weak-* sense,
\begin{align*}
 \Div z = \lim_{\substack{n\to +\infty,\\n\in I}}\Div z^n &= \lim_{\substack{n\to +\infty,\\n\in I}} \left(\frac{1}{n}\sum_{i=1}^n \delta_{x_{i,n}} - \frac{1}{n}\sum_{j=1}^n \delta_{y_{j,n}}   \right)=\mu^+-\mu^- =\mu.
\end{align*}
Moreover,
\begin{align*}
     \abs{z}(\R^d\setminus Q) &\leq \liminf_{\substack{n\to +\infty,\\n\in I}}\abs{z^n}(\R^d\setminus Q) = 0,
\end{align*}
so that $\textrm{spt}(z) \subseteq Q$. Eventually, the lower semi-continuity of $G$ ensures that $G(z)\leq L<+\infty$, and $z$ is thus admissible for~\eqref{eq:ProblemContinuous2}. \qed
\end{proof}

The next lemma shows that the decomposition of a charge in two parts implies a similar decomposition of integral functionals.
\begin{lemma}\label{lem:pplusq}
Let $p,q \in \mathcal{V}$ such that
\begin{align}\label{eq:sumtvpq}
    \abs{p+q} = \abs{p}+\abs{q}
\end{align}
as measures, and $G(z)=\int g(x,z)$ such that (A1) holds. Then
\begin{align}
    \int g(x,p+q) = \int g(x,p) + \int g(x,q).
\end{align}
\end{lemma}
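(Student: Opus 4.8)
The plan is to reduce the computation of $G$ to a common reference measure and then to exploit the equality case of the triangle inequality pointwise. Set $\lambda := \abs{p} + \abs{q}$, a finite nonnegative Radon measure; then $p \ll \lambda$ and $q \ll \lambda$, hence $p+q \ll \lambda$ as well. Write $u := \tfrac{\diff p}{\diff \lambda}$ and $v := \tfrac{\diff q}{\diff \lambda}$, both in $L^1(\lambda;\R^d)$. The first step is to recall that, because $g(x,\cdot)$ is positively $1$-homogeneous (and $g(x,0)=0$, as is always the case for such integrands, see~\cite[Sec.~2.6]{ambrosioFunctionsBoundedVariation2000}), the functional $G$ may be computed against \emph{any} dominating measure: if $z\ll\lambda'$ with density $w$, then $\abs z=\abs w\,\lambda'$, $\tfrac{\diff z}{\diff\abs z}=w/\abs w$ $\abs z$-a.e., and hence
\[
G(z)=\int g\!\left(x,\tfrac{\diff z}{\diff\abs z}\right)\diff\abs z=\int g\!\left(x,\tfrac{w}{\abs w}\right)\abs w\,\diff\lambda'=\int g(x,w)\,\diff\lambda',
\]
the last equality by $1$-homogeneity (both integrands vanishing on $\{w=0\}$). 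Applying this to $p$, $q$ and $p+q$ with $\lambda'=\lambda$, we get $G(p)=\int g(x,u)\,\diff\lambda$, $G(q)=\int g(x,v)\,\diff\lambda$ and $G(p+q)=\int g(x,u+v)\,\diff\lambda$, so it suffices to prove the pointwise identity $g(x,u(x)+v(x))=g(x,u(x))+g(x,v(x))$ for $\lambda$-a.e.\ $x$.

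Second, I would translate the hypothesis~\eqref{eq:sumtvpq} into a pointwise condition. Since $\abs{p+q}=\abs{u+v}\,\lambda$ and $\abs p+\abs q=(\abs u+\abs v)\,\lambda$, the assumption $\abs{p+q}=\abs p+\abs q$ is equivalent to $\abs{u(x)+v(x)}=\abs{u(x)}+\abs{v(x)}$ for $\lambda$-a.e.\ $x$. Squaring, this is exactly the equality case $u(x)\cdot v(x)=\abs{u(x)}\,\abs{v(x)}$ of the Cauchy--Schwarz inequality, which forces $u(x)$ and $v(x)$ to be nonnegatively collinear: for $\lambda$-a.e.\ $x$, either $u(x)=0$, or else $v(x)=t(x)u(x)$ with $t(x):=\abs{v(x)}/\abs{u(x)}\ge 0$, a Borel function of $x$.

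Third comes the pointwise conclusion, using only positive $1$-homogeneity of $g(x,\cdot)$. On $\{u\neq 0\}$, with $v(x)=t(x)u(x)$ and $t(x)\ge 0$,
\begin{align*}
g(x,u(x)+v(x))&=g\bigl(x,(1+t(x))u(x)\bigr)=(1+t(x))\,g(x,u(x))\\
&=g(x,u(x))+g\bigl(x,t(x)u(x)\bigr)=g(x,u(x))+g(x,v(x)),
\end{align*}
using $g(x,sw)=s\,g(x,w)$ for $s\ge 0$ (and $g(x,0)=0$ when $t(x)=0$). On $\{u=0\}$ the identity reads $g(x,v(x))=g(x,0)+g(x,v(x))$, true since $g(x,0)=0$. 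Integrating against $\lambda$ and splitting the integral of the nonnegative sum yields $G(p+q)=G(p)+G(q)$.

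The only genuinely delicate point is the measure-theoretic reduction of the first step (that $G$ may be evaluated with respect to an arbitrary dominating measure, together with $g(x,0)=0$); everything afterwards is elementary, relying on the equality case of the triangle inequality and on homogeneity. Note in particular that convexity of $g(x,\cdot)$ is not used here, and that the reverse inequality $G(p+q)\le G(p)+G(q)$ would in any case follow for free from subadditivity of the convex, positively homogeneous functional $G$.
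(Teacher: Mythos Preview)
Your proof is correct and follows essentially the same approach as the paper's: reduce to densities with respect to a common reference measure (you take $\lambda=\abs{p}+\abs{q}$, the paper takes $\abs{z}=\abs{p+q}$, but by hypothesis these coincide), use the equality case of the triangle inequality to obtain nonnegative collinearity of the densities, and conclude by positive $1$-homogeneity of $g(x,\cdot)$. The only differences are cosmetic---you parameterize the collinearity via $v=tu$ with $t\ge 0$, while the paper writes $\tfrac{\diff p}{\diff\abs{z}}=\alpha\,\tfrac{\diff z}{\diff\abs{z}}$ with $\alpha\in[0,1]$---and you are somewhat more explicit about the ``change of reference measure'' step.
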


\begin{proof}
Let $z= p+q$. By assumption, $\abs{z}=\abs{p}+\abs{q}$, hence $p$ and $q$ are absolutely continuous w.r.t. $\abs{z}$. As a result of~\eqref{eq:sumtvpq}, for $\abs{z}$-a.e. $x \in \R^d$ their densities w.r.t. $\abs{z}$ satisfy
\begin{align*}
	\frac{\diff z}{\diff \abs{z}}(x) = \frac{\diff p}{\diff \abs{z}}(x)+ \frac{\diff q}{\diff \abs{z}}(x) \quad \mbox{ and } \abs{\frac{\diff z}{\diff \abs{z}}(x)} = \abs{\frac{\diff p}{\diff \abs{z}}(x)}+ \abs{\frac{\diff q}{\diff \abs{z}}(x)}.
\end{align*}
Hence $\frac{\diff p}{\diff \abs{z}}(x)$ and $\frac{\diff q}{\diff \abs{z}}(x)$ must be colinear and pointing in the same direction. Thus, there exists $\alpha \in  [0,1]$ such that
\begin{align*}
	\frac{\diff p}{\diff \abs{z}}(x) = \alpha \frac{\diff z}{\diff \abs{z}}(x) \quad \mbox{and}\quad 	\frac{\diff q}{\diff \abs{z}}(x) = (1-\alpha) \frac{\diff z}{\diff \abs{z}}(x).
\end{align*}
Then by the positive $1$-homogeneity of $g$ (w.r.t its second argument) we get
\begin{align*}
	g\left(x,\frac{\diff z}{\diff \abs{z}}(x)\right) &= \alpha g\left(x,\frac{\diff z}{\diff \abs{z}}(x)\right) + (1-\alpha) g\left(x,\frac{\diff z}{\diff \abs{z}}(x)\right)\\
						       &=  g\left(x,\frac{\diff p}{\diff \abs{z}}(x)\right)+g\left(x,\frac{\diff q}{\diff \abs{z}}(x)\right),
\end{align*}
so that $\int g(x,z) = \int g(x,p)  +\int g(x,q)$ (see~\eqref{eq:defcvxmes}).

\end{proof}

The next Lemma provides a measurable selection map for the geodesics.
\begin{lemma}\label{lem:selection}
	Assume that (A0), (A1), (A2) hold. Then, there is a Borel map $\rho\colon Q\times Q \rightarrow \Lipd$ which maps any $(x,y)\in Q^2$ to some geodesic (for $g$) from $x$ to $y$ in $Q$. More precisely, $\rho(x,y)=\rho_{x,y}$ with $\rho_{x,y}\colon [0,1]\to Q$ with $\rho_{x,y}(0)=x$, $\rho_{x,y}(1)=y$ and $\rho_{x,y}$ is a Lipschitz curve which minimizes 
	$\int g(\gamma(t),\gamma'(t))\diff t$ 
among the Lipschitz curves from $x$ to $y$ in Q).
\end{lemma}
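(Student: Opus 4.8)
The plan is to combine a direct-method existence argument with a classical measurable selection theorem, using (A0) precisely to confine all geodesics to a fixed compact set of curves. First, fix $(x,y)\in Q^2$. By positive $1$-homogeneity together with (A2), one has $g(x,v)\ge c|v|$ for every $(x,v)\in Q\times\R^d$ (apply (A2) to $v/|v|$), so any Lipschitz curve $\gamma$ from $x$ to $y$ with $\int_0^1 g(\gamma,\gamma')\le D+1$ has length $\int_0^1|\gamma'|\le (D+1)/c$; since $\int_0^1 g(\gamma,\gamma')$ is invariant under reparametrization (Section~\ref{sec:chargecurves}), we may pass to the constant-speed parametrization on $[0,1]$ and assume $|\gamma'|\le (D+1)/c$ a.e. By (A0) such competitors exist, so the relevant class of curves is nonempty and contained in the set $\mathcal{K}$ of $\big((D+1)/c\big)$-Lipschitz maps $[0,1]\to Q$, which is compact in $C([0,1];\R^d)$ by Arzel\`a--Ascoli. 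Along a minimizing sequence $(\gamma_n)$ we extract a uniform limit $\gamma\in\mathcal{K}$ with $\gamma(0)=x$, $\gamma(1)=y$; since the $\gamma_n'$ are bounded in $L^\infty$, up to a further subsequence $\gamma_n'\rightharpoonup\gamma'$ weakly in $L^1([0,1];\R^d)$, and the lower semicontinuity of convex integral functionals with an l.s.c. integrand (Ioffe's theorem, or, as elsewhere in the paper, approximating $g$ from below by continuous integrands and using monotone convergence) gives $\int_0^1 g(\gamma,\gamma')\le\liminf_n\int_0^1 g(\gamma_n,\gamma_n')$. Hence a geodesic exists, and the value function $V(x,y):=\min\{\int_0^1 g(\gamma,\gamma'):\gamma\in\mathcal{K},\ \gamma(0)=x,\ \gamma(1)=y\}$ is well defined with values in $[0,D]$.

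Next I would show that $V$ is lower semicontinuous, hence Borel. If $(x_n,y_n)\to(x,y)$, pick geodesics $\gamma_n\in\mathcal{K}$; up to a subsequence $\gamma_n\to\gamma$ uniformly with $\gamma\in\mathcal{K}$, $\gamma(0)=x$, $\gamma(1)=y$, and the same lower semicontinuity yields $V(x,y)\le\int_0^1 g(\gamma,\gamma')\le\liminf_n V(x_n,y_n)$. Likewise the functional $\Phi\colon\mathcal{K}\to[0,D]$, $\Phi(\gamma)=\int_0^1 g(\gamma,\gamma')$, is l.s.c. on the compact metric space $\mathcal{K}$, and the evaluations $\gamma\mapsto\gamma(0)$, $\gamma\mapsto\gamma(1)$ are continuous.

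Then I would consider the set-valued map $F\colon Q^2\to 2^{\mathcal{K}}$ defined by $F(x,y)=\{\gamma\in\mathcal{K}:\gamma(0)=x,\ \gamma(1)=y,\ \Phi(\gamma)\le V(x,y)\}$, which is exactly the (nonempty) set of geodesics from $x$ to $y$, and is compact, being a closed subset of $\mathcal{K}$. Its graph $\{((x,y),\gamma):\gamma\in F(x,y)\}$ is a Borel subset of $Q^2\times\mathcal{K}$: it is cut out by the two closed conditions $\gamma(0)=x$, $\gamma(1)=y$ and by $\Phi(\gamma)-V(x,y)\le 0$, where $\Phi$ and $V$ are both Borel. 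A classical measurable selection theorem for Borel multifunctions with nonempty compact values in a Polish space (Kuratowski--Ryll-Nardzewski, or the Arsenin--Kunugui uniformization theorem) then provides a Borel map $\rho\colon Q^2\to\mathcal{K}$ with $\rho(x,y)\in F(x,y)$ for all $(x,y)$; composing with the embedding of $\mathcal{K}$ into $\Lipd$ of Section~\ref{sec:chargecurves} (reparametrize by arc length and extend by the terminal value outside $[0,\mathrm{length}]$) gives the desired selection $\rho_{x,y}$.

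The main obstacle is not existence of geodesics, which is routine once (A0) and (A2) provide the uniform length bound, but rather the measurability step: one must verify that the geodesic multifunction has Borel graph, which reduces to the Borel measurability of the value function $V$, and then correctly invoke a selection theorem valid for Borel sets with compact sections. A secondary point requiring care is the lower semicontinuity of $\gamma\mapsto\int_0^1 g(\gamma,\gamma')$ along merely uniformly convergent curves with weakly convergent derivatives when $g$ is only l.s.c.\ (not continuous) in its first variable; this is where one appeals to Ioffe's lower semicontinuity theorem, or to the approximation of $g$ from below by continuous integrands already used in the paper. The remaining verifications are bookkeeping within the compact metrizable structure on the space of curves.
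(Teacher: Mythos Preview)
Your proof is correct and follows essentially the same strategy as the paper: use (A0)--(A2) to confine all competitors to a fixed compact set of Lipschitz curves, establish lower semicontinuity of $\gamma\mapsto\int g(\gamma,\gamma')$ there, and then invoke a measurable selection theorem. The only notable difference is that the paper applies Brown's selection theorem (minimizing an l.s.c.\ functional over a multifunction with $\sigma$-compact sections) directly, so it never needs to show the value function is Borel, whereas you first prove $V$ is l.s.c.\ and then appeal to Arsenin--Kunugui for a Borel-graph multifunction with compact sections; both routes are valid and essentially equivalent.
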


\begin{proof}
The proof consists in invoking~\cite[Cor. 1]{Brown1973}. 
By Assumptions (A0) and (A2), for each $(x,y)\in Q$, there exists some curve $\gamma$ from $x$ to $y$, with 
\begin{align*}
    D\geq \int g(\gamma(t),\gamma'(t))\diff t \geq c \int \abs{\gamma'(t)}\diff t.
\end{align*}
Let us fix $\ell \in \N$, $\ell \geq D/c$. Possibly reparametrizing $\gamma$ using arclength, we deduce that for every $(x,y) \in  Q^2$,  there is some $\gamma \in B_\ell$ with $\gamma(-\ell)=x$, $\gamma(\ell)=y$, and $\int g(\gamma,\gamma')\leq D$, where $B_\ell$ defined in \eqref{eq:defbpartition}. 
The set $B_\ell$ is compact by the Arzelà–Ascoli theorem, hence it is a complete metric space, and so is $Q^2$.

Let $\mathcal{F}\colon \gamma \in B_\ell \mapsto \int g(\gamma,\gamma')\in \R_+\cup \{+\infty\}$.
We observe that $\mathcal{F}$ is lower semi-continuous. Indeed, let $(\gamma_{n})_{n\in \N}$ be a sequence which converges to some $\gamma \in B_\ell$ (that is, for the uniform convergence on $[-\ell,\ell]$). 
Possibly extracting a subsequence, we assume that $\lim_{n\to +\infty} \int g(\gamma_n,\gamma_n') = \liminf \int g(\gamma_n,\gamma_n')$.
Then, the functions $(\gamma_n')$ are bounded in $L^1(-\ell,\ell)$, hence we may again extract a subsequence (that is find $I \subseteq \N$ infinite) such that $\gamma_n'$ converges a.e. to some $\lambda \in L^1(-\ell,\ell)$ as $n\to +\infty$ in $I$.
Passing to the limit in the equality (using dominated convergence)
\begin{align*}
	\gamma_n(t) = \gamma_n(-\ell)+\int_{-\ell}^t \gamma_n'(s)\diff s
\end{align*}
we see that $\lambda=\gamma'$. Then, by Fatou's lemma, $\int g(\gamma,\gamma') \leq \liminf \int g(\gamma_n,\gamma_n')$, so that $\mathcal{F}$ is lower semi-continuous.

As a result, the set 
\begin{align*}
	\mathcal{D} = \left\{((x,y),\gamma)\in Q^2\times B_\ell: \mathcal{F}(\gamma) <+\infty\ \mbox{and $\gamma(-\ell)=x$, $\gamma(\ell)=y$}\right\}
\end{align*}
is Borel, and, for each $(x,y)\in Q$, the set $\mathcal{D}_{(x,y)}=\{\gamma: ((x,y),\gamma)\in \mathcal{D} \}$ is $\sigma$-compact (it is the union of $\{\mathcal{F}\leq n\}\cap \{\gamma: \gamma(-\ell)=x, \gamma(\ell)=y\}$ for $n \in \N$, which are closed in the compact set $B_\ell$).
As a result, \cite[Cor. 1]{Brown1973} ensures the existence of a Borel measurable selection map $\rho$ as claimed.
	
\end{proof}



\section{Algorithm for the convex optimal
path problem}\label{sec:PD}
To solve~\eqref{eq:ProblemDiscrete}
we consider the saddle-point
 formulation~\eqref{eq:saddlePD},
 in which $\|\cdot\|_*$ is the dual norm of $\|\cdot\|$, {defined by $\|q\|_* = \sup_{\|p\|\le 1} q\cdot p$,}
and $\chi$ denotes a characteristic function
($0$ if the condition is satisfied, $+\infty$ else). Given $\tau, \sigma > 0$ with $\tau \sigma \|A\|^2\le 1$, the algorithm in~\cite{chambolle_pock} is:
\begin{enumerate}
    \item Initialize $z^0 \in \mathcal{V}_d$, $p^0 \in (\mathbb{R}^\mathcal{N})^2$, set $\bar{z}^0 = z^0$
    \item For each iteration $n \geq 0$, update:
    \begin{equation*}
    \begin{cases}
        p^{n+1} &= \text{Proj}_{\{p:\forall i, \, j, \, \|p_{i,j}\|_* \leq g_{i,j} \}}(p^n + \sigma A \bar{z}^n)\\
        z^{n+1} &= \text{Proj}_{z:\{D^* z = \mu\}}(z^n - \tau A^* p^{n+1})\\
        \bar{z}^{n+1} &= 2 z^{n+1} - z^n
    \end{cases}
    \end{equation*}
\end{enumerate}
where the first step requires to project onto  $\{p:\forall i, \, j, \, \|p_{i,j}\|_* \leq g_{i,j} \}$, which amounts to project independently each component on a 2D disc, and
\begin{equation*}
\text{Proj}_{\{D^* z = \mu\}}(z) = z + D(D^*D)^{-1}D^*(z_0 - z)
\end{equation*}
is the projection onto the space ${D^*z = \mu}$, with $z_0$ being any vector field such that $D^* z_0 = \mu$. The discrete
Neumann  Laplacian $(D^*D)$ (which is invertible
on functions with zero average)
is diagonalized and inverted by means
of a DCT (which turns out to be consistent with the no-flux
condition on the boundary for vector fields), except in the
``roto-translational'' representation (Sec.~\ref{sec:curvature}) where the block is periodic in the third component, and
one has to use a FFT for that one. We relied on the Python
bindings \texttt{pyFFTW} for the FFTW3~\cite{FFTW3}
library to compute the Fourier transforms.
The extension described in Section~\ref{sec:curvature}
just requires to adapt the first projection step
to the corresponding anisotropic energy, replacing
it with the formulas~\eqref{eq:anisoproj}, for the
various choices of $\bar h$.

For bounded sets
\[B_1 \subset \{z \in \mathcal{V}_d, \, D^*z = \mu \} \quad\text{ and }\quad B_2 \subset \{p \in (\mathbb{R}^\mathcal{N})^2, \, \forall i,j, \, \|p_{i,j}\|_* \leq g_{i,j}\},\]
we consider the partial gap:
\begin{equation*}
    \mathcal{G}_{B_1 \times B_2}(z,p) = \max_{p' \in B_2} \langle A z, p' \rangle - \min_{z' \in B_1} \langle A z', p \rangle.
\end{equation*}
It is shown~\cite{chambolle_pock} that:
\begin{equation} \label{cvg rate}
    \mathcal{G}_{B_1 \times B_2}(\bar z_k, \bar p_k) \leq \frac{D(B_1,B_2)}{k}
\end{equation}
where we have denoted $\bar z_k = \sum_{i=1}^k z^i / k$ and $\bar p_k = \sum_{i=1}^k p^i / k$ and where
\begin{equation} \label{diameter}
    D(B_1,B_2) = \max_{(z,p) \in B_1 \times B_2} \frac{\|z - z^0\|^2_2}{2\tau} + \frac{\|p - p^0\|^2_2}{2\sigma}
\end{equation}
Since the $z$ variable is expected to be a superposition of curves, one expects $\|z - z^0\|^2_2 \sim \sqrt{N \times M}$ (it behaves like the characteristic length of the domain), and since the dual variable represents the (uniformly bounded) gray levels values of the whole image, $\|p - p^0\|^2_2 \sim N \times M$. Hence, optimizing for $\tau$, $\sigma$ in~\eqref{diameter} under the constraint $\tau \sigma \|A\|^2 \le 1$, we set in practice $\tau = .99/(\|A\| (N \times M)^{1/4})$ and $\sigma = (N \times M)^{1/4}/\|A\|$. With this choice, $D(B_1, B_2) \lesssim \|A\|(N \times M)^{3/4}$ (the norm of the averaging
operator $A$ is of order $1$).

For images of similar size as the one in Fig.~\ref{fig:resultat_chromosomes} ($200 \times 200$ pixels), only about 60 steps of the primal-dual algorithm are needed at each motion of the Dirac masses. Practical convergence is a lot faster than the $O(1/k)$ rate guaranteed in~\eqref{cvg rate}, as for a $200 \times 200$ image, that would require thousands of iterations (estimating $D(B_1, B_2) \sim 3000$). It seems
that the ``preconditioning'' induced by the choice of projecting onto the divergence constraint, while
not changing much the theoretical bound, improves
drastically the practical convergence.
\end{document}